\newcommand\thickbar[1]{\accentset{\rule{.365em}{.5pt}}{#1}}
\newcommand{\X}{\mathcal{X}}
\newcommand{\Y}{\mathcal{Y}}
\newcommand{\Z}{\mathcal{Z}}
\newcommand{\trans}{^{\scriptscriptstyle \top}}
\newcommand{\R}{\mathbb{R}}
\renewcommand{\P}{{\rm P}}
\newcommand{\N}{\mathbb{N}}
\newcommand{\la}{\lambda}
\newcommand{\proj}{\ensuremath{\text{\rm proj}}}
\newcommand{\tr}{\ensuremath{\text{\rm Tr}}}
\newcommand{\ran}{\ensuremath{\text{\rm Ran}}}
\newcommand{\argmin}{\operatornamewithlimits{argmin}}
\newcommand{\ee}{{\mathcal{E}}}
\newcommand{\rx}{{R}}
\newcommand{\Zn}{Z}
\newcommand{\LL}{\mathcal{L}}
\newcommand{\cR}{\mathcal{R}}
\newcommand{\T}{\mathcal{T}}
\newcommand{\Real}{\mathbb{R}}
\newcommand{\Exp}{\mathbb{E}}
\newcommand{\psd}{\mathbb{S}_+^d}
\newcommand{\vertiii}[1]{{\left\vert\kern-0.25ex\left\vert\kern-0.25ex\left\vert #1 
    \right\vert\kern-0.25ex\right\vert\kern-0.25ex\right\vert}}
\declaretheorem[name=Theorem,refname=Thm.]{theorem}
\declaretheorem[name=Lemma,sibling=theorem]{lemma}
\declaretheorem[name=Proposition,refname=Prop.,sibling=theorem]{proposition}
\declaretheorem[name=Remark]{remark}
\declaretheorem[name=Corollary,refname=Cor.,sibling=theorem]{corollary}
\declaretheorem[name=Assumption,refname=Asm.]{assumption}
\newcommand{\task}{\mu}  
\newcommand{\Ss}{\mathcal{S}}  
\newcommand{\Tt}{\mathcal{M}}   
\newcommand{\D}{\mathcal{D}}  
\newcommand{\env}{\rho}  
\newcommand{\ms}{{\env_\Ss}}  
\newcommand{\mt}{{\env_\Tt}}  
\crefname{assumption}{Asm.}{Asm.}
\crefname{equation}{Eq.}{Eq.}
\crefname{figure}{Fig.}{Fig.}
\crefname{table}{Tab.}{Tab.}
\crefname{section}{Sec.}{Sec.}
\crefname{theorem}{Thm.}{Thm.}
\crefname{proposition}{Prop.}{Prop.}
\crefname{fact}{Fact}{Facts}
\crefname{lemma}{Lemma}{Lemmas}
\crefname{corollary}{Cor.}{Cor.}
\crefname{example}{Ex.}{Ex.}
\crefname{remark}{Rem.}{Rem.}
\crefname{algorithm}{Alg.}{Algorithms}
\crefname{appendix}{App.}{App.}
\crefname{algorithm}{Alg.}{Alg.}
\providecommand{\scal}[2]{\left\langle{#1},{#2}\right\rangle}
\providecommand{\nor}[1]{\left\|{#1}\right\|}
\renewcommand{\paragraph}[1]{~\newline{\bfseries #1.}}
\renewcommand{\P}{{\mathcal{P}}}
\title{\sffamily\huge\bf Conditional Meta-Learning of Linear Representations}
\date{}
\author{\hspace{-.3truecm}Giulia Denevi$^{1}$,
Massimiliano Pontil$^{1,2}$ and Carlo Ciliberto$^{1,2}$ \\
\footnotesize $^1$University College of London (UK), $^2$Istituto Italiano di Tecnologia (Italy) \\
{\footnotesize \em g.denevi@ucl.ac.uk, massimiliano.pontil@iit.it, c.ciliberto@ucl.ac.uk}
}
\begin{document}

\maketitle

\begin{abstract}
  \noindent Standard meta-learning for representation learning aims to find a common representation to be shared across multiple tasks. The effectiveness of these methods is often limited when the nuances of the tasks’ distribution cannot be captured by a single representation. In this work we overcome this issue by inferring a conditioning function, mapping the tasks' side information (such as the tasks' training dataset itself) into a representation tailored to the task at hand. We study environments in which our conditional strategy outperforms standard meta-learning, such as those in which tasks can be organized in separate clusters according to the representation they share. We then propose a meta-algorithm capable of leveraging this advantage in practice. In the unconditional setting, our method yields a new estimator enjoying faster learning rates and requiring less hyper-parameters to tune than current state-of-the-art methods. Our results are supported by preliminary experiments.
\end{abstract}

\footnotetext[1]{Department of Computer Science, University College London, London, United Kingdom}
\footnotetext[2]{Computational Statistics and Machine Learning, Istituto Italiano di Tecnologia, Genova, Italy}


\section{Introduction}
\label{Introduction}

Learning a shared representation among a class of machine learning problems is a well-established approach used both in multi-task learning 
\cite{argyriou2008convex,jacob2009clustered,caruana1997multitask}
and meta-learning \cite{finn2019online,denevi2019online,balcan2019provable,finn2018meta,tripuraneni2020provable,maurer2009transfer,pentina2014pac,hazan19,bertinetto2018meta}. 
The idea behind this methodology is to consider two nested problem: at the within-task level an empirical risk
minimization is performed on each task, using inputs transformed by the current representation, on the outer-task (meta-) level, such a representation is updated taking into account the errors of the within-task algorithm on previous tasks.

Such a technique was shown to be advantageous in contrast to solving each task independently when the tasks share a low dimensional representation, see e.g. \cite{maurer2016benefit,maurer2013sparse,denevi2019online,maurer2009transfer,tripuraneni2020provable,balcan2019provable,khodak2019adaptive,hazan19}. However, in real world applications we often deal with heterogeneous 
classes of learning tasks, which may overall be only loosely related. Consequently, the tasks’ commonalities may not be captured well by a single representation shared among all the tasks.
This is for instance the case in which the tasks can be organized in different groups (clusters), where only tasks belonging to the same cluster share the same low-dimensional representation.

To overcome this issue, in this work, we follow the recent literature on heterogeneous meta-learning ~\cite{wang2020structured,vuorio2019multimodal,rusu2018meta,jerfel2019reconciling,cai2020weighted,yao2019hierarchically,denevi2020advantage,bertinetto2018meta} and propose a so-called {\itshape conditional meta-learning} approach for meta-learning a representation. Our algorithm learns a conditioning function 
mapping available tasks' side information into a \emph{linear} representation that is tuned to that task at hand. Our approach borrows from \cite{denevi2020advantage}, where the authors proposed 
a conditional meta-learning approach for fine tuning and biased regularization. In those cases however, the tasks' target vectors are assumed to be all 
close to a common bias vector rather than sharing the same low-dimensional linear 
representation, as instead explored in this work. As we explain in the following, working 
with linear representations brings additional difficulties than working with bias vectors, but, on the other hand, it is also a relevant and effective framework in many scenarios.

In this work, we propose an online conditional method for linear representation learning with strong theoretical guarantees. In particular, we show that the method is advantageous 
over standard (unconditional) representation learning methods used in meta-learning
when the environment of observed tasks is heterogeneous. 
\paragraph{Contributions and Organization} 
The contributions of this work are the following. First, in 
\cref{Conditional Representation Learning}, we design 
a conditional meta-learning approach to infer a linear 
representation that is tuned to the task at hand. 
Second, in \cref{The advantage of Conditional Representation Learning},
we formally characterize circumstances under which our  
conditional framework brings advantage w.r.t. the standard 
unconditional approach. In particular, we argue that this is the case  
when the tasks are organized in different clusters according 
to the support pattern or linear representation their target vectors' share. 
Third, in \cref{Conditional Representation Meta-Learning Algorithm},
we design a convex meta-algorithm providing a comparable gain as the 
number of the tasks it observes increases. In the unconditional setting, 
the proposed method is able to recover faster rates and it requires to tune 
one less hyper-parameter w.r.t. the state-of-the-art unconditional 
methods. Finally, in \cref{experiments}, we present numerical experiments 
supporting our theoretical claims. We conclude our work in \cref{Conclusion} 
and we postpone the missing proofs to the supplementary material.


\section{Conditional Representation Learning}
\label{Conditional Representation Learning}

In this section we introduce
our conditional meta-learning setting for representation learning. Then, we proceed to 
identify the differences w.r.t. (with respect to) the standard unconditional counterpart. We begin our overview by first introducing the class of inner learning algorithms considered in this work.

\paragraph{Within-Task Algorithms}
We consider the standard linear supervised learning setting over  
$\Z = \X\times\Y$ with $\X\subseteq\R^d$ and $\Y\subseteq\R$ 
input and output spaces, respectively. We denote by $\P(\Z)$ the set 
of probability distributions (tasks) over $\Z$. For any task $\mu \in {\cal P}(\Z)$ 
and a given loss function $\ell:\R\times\R\to\R$, we aim at finding a weight 
vector $w_\task\in\R^d$ minimizing the {\itshape expected risk}
\begin{equation} \label{single_task_risk}
\min_{w\in\R^d}~\cR_\task(w) 
\quad \quad 
\cR_\task(w) = \Exp_{(x,y)\sim\task}~\ell \bigl(\scal{ x}{w}, y \bigr),
\end{equation}
where, $\scal{\cdot}{\cdot}$ represents the Euclidean product in $\Real^d$. 
In practice, $\task$ is only partially observed trough a dataset 
$\Zn = (x_i, y_i)_{i = 1}^n \sim \task^n$, namely, a collection of $n$ identically 
independently distributed (i.i.d.) points sampled from $\task$. Thus, the goal 
becomes to use a learning algorithm in order to estimate a candidate weight 
vector with a small expected risk converging to the ideal $\cR_\mu(w_\mu)$ 
as the sample size $n$ grows.

Specifically, in this work we will consider as candidate estimators, the family of regularized empirical risk 
minimizers for linear feature learning \cite{argyriou2008convex}. Formally, denoting by $\D = \bigcup_{n\in\N}\Z^n$ the space of all datasets on $\Z$, for a given $\theta\in\Theta$ in $\Theta=\psd$ the set of positive definite $d\times d$ matrices, we will consider the 
following learning algorithms $A(\theta, \cdot):\D\to\R^d$:
\begin{equation} \label{RERM_feature}
A(\theta, \Zn) = \argmin_{w \in \ran(\theta)\subset\Real^d} ~ \cR_{\Zn,\theta}(w),
\end{equation}
where $\ran(\theta)$ denotes the range of $\theta$ and we defined
\begin{equation}
\cR_{\Zn,\theta}(w)  = \frac{1}{n} \sum_{i = 1}^n \ell(\langle x_i, w \rangle, y_i) 
+ \frac{1}{2} \big \langle w, \theta^\dagger w \big \rangle,
\end{equation}
for any $w\in\ran(\theta)$. Here $\theta^\dagger$ denotes the pseudoinverse of $\theta$. Throughout this work we will denote by $\cR_Z(\cdot) = 1/n \sum_{i = 1}^n 
\ell(\langle x_i, \cdot \rangle, y_i)$ the empirical risk associated to $\Zn$. 

Here, $\theta$ plays the role of a linear feature representation that is learned during the meta-learning process \citep[see][for more details on the interpretation]{argyriou2008convex}. 

\begin{remark} [Within-Task Regularization Parameter]
\label{no_lambda}
We observe that, differently to previous work~\citep[see e.g.][]{denevi2019online}, we consider 
the meta-parameters $\theta$ to be any
positive semidefinite matrix, without constraint on its trace (e.g. $\tr(\theta)\leq1$).
This allows us to absorb the regularization parameter
$\la$ typically used to control $\la\scal{w}{\theta^\dagger w}$. This choice is advantageous both in practice since it reduces the number of hyper-parameter to tune and in theory (as discussed in the following) by enjoying faster learning rates.
\end{remark}

\begin{remark}[Online Variant of \cref{RERM_feature}] \label{online_algorithm_remark}
While in the following we will focus on algorithms of the form of \cref{RERM_feature}, our analysis and results extend also to the setting in which the exact minimization of the empirical risk is replaced by a pre-conditioned variant of online gradient 
descent on $\cR_{\Zn, \theta}$, with starting point $w_0 = 0\in\R^d$ and step size inversely proportional to the iteration:
\begin{equation} \label{online_inner_algorithm}
\begin{split}
&A(\theta, \Zn) = \frac{1}{n} \sum_{i = 1}^n w_i,
\quad \quad 
w_{i+1} = w_i - \frac{\theta p_i}{i} \\
& p_i = s_i x_i + \theta^\dagger w_i
\quad \quad
s_i \in \partial \ell(\cdot, y_i)(\langle x_i, w_i \rangle).
\end{split}
\end{equation}
This modification brings additional negligible 
logarithmic factors in our bounds in the following.
\end{remark}
\paragraph{Unconditional Meta-Learning} 
The standard unconditional meta-learning setting assumes there
exist a meta-distribution $\rho  \in \P(\Tt)$ -- also 
called {\itshape environment} in
\citep{baxter2000model} -- over a family $\Tt\subseteq\P(\Z)$ of distributions 
(tasks) $\task$ and it aims at selecting an inner algorithm in the family above
that is {well suited to solve tasks $\task$ sampled from  $\env$}. 
This target can be reformulated as finding a linear representation 
$\theta_\env \in \Theta$ such that the corresponding
algorithm $A(\theta_\env,\cdot)$ 
minimizes the \emph{transfer risk}
\begin{equation} \label{meta_learning_problem_1}
\min_{\theta \in \Theta} ~ \ee_\env(\theta)
\qquad 
\ee_\env(\theta) = \Exp_{\task \sim \env} ~ \Exp_{\Zn \sim \task^n}
~ \cR_\task \bigl( A(\theta, \Zn) \bigr).
\end{equation}
In practice, this stochastic problem is usually tackled by iteratively 
sampling a task $\task\sim\env$ and a corresponding
dataset $\Zn \sim \task^n$, and, then, performing a step of stochastic 
gradient descent on an empirical approximation of \cref{meta_learning_problem_1} 
computed from $\Zn$. This has approach has proven
effective for instance when the tasks of the environment share a 
simple common linear representation, see e.g. 
\cite{finn2019online,balcan2019provable,khodak2019adaptive,denevi2019online,pmlr-v70-finn17a,denevi2019learning,finn2018meta,hazan19}.
However, when a single linear representation is not sufficient for the entire 
environment of tasks (e.g. multi-clusters), this homogeneous approach is 
expected to fail. In order to overcome this limitation, some recent works have 
adopted the following conditional approach to the problem, see e.g. 
\cite{wang2020structured,vuorio2019multimodal,rusu2018meta,jerfel2019reconciling,cai2020weighted,yao2019hierarchically,denevi2020advantage}. 

\paragraph{Conditional Meta-Learning}
Analogously to \cite{denevi2020advantage}, we assume that any task $\task \sim \env$
is provided of additional side information $s \in \Ss$. In such a case, we consider the 
environment $\env$ as a distribution $\rho \in \P(\Tt,\Ss)$ over the set $\Tt$ of tasks and the set $\Ss$ of 
possible side information. Moreover, as usual, we assume $\env$ to decompose in $\env(\cdot|s)\env_\Ss(\cdot)$ 
and $\env(\cdot|\mu)\env_\Tt(\cdot)$ the conditional and marginal distributions w.r.t. $\Ss$ and $\Tt$. 
For instance, we observe that the side information $s$ could contain descriptive features 
of the associated task, for example attributes in collaborative filtering \cite{abernethy2009new}, or additional 
information about the users in recommendation systems \cite{harper2015movielens}). Moreover $s$ could be 
formed by a portion of the dataset sampled from $\task$ (see \cite{wang2020structured,denevi2020advantage}).
Conditional meta-learning leverages this additional side information in order to 
adapt (or condition) the linear representation $\theta \in \Theta$ on the associated task 
at hand, by learning a linear-representation-valued function $\tau$ solving the problem 
\begin{equation} \label{conditional_meta_learning_problem_1}
\min_{\tau \in \T}{\ee_\env(\tau)},
\qquad
\ee_\env(\tau) {=} \Exp_{(\task{,} s) \sim \env} \Exp_{\Zn \sim \task^n} 
\cR_\task({{A}(\tau(s){,}\Zn)})
\end{equation}
over the space $\T$ of measurable functions $\tau: \Ss \to \Theta$. Notice that 
we retrieve the unconditional meta-learning problem in \cref{meta_learning_problem_1} 
if we restrict \cref{conditional_meta_learning_problem_1} to the set of functions  
$\T^{\rm const} = \{\tau ~|~ \tau(\cdot) \equiv \theta,~ \theta\in\Theta\}$, mapping all 
the side information into the same constant linear representation. 

In the next section, we will investigate the theoretical advantages of adopting such a conditional perspective 
and, then, we will introduce a convex meta-algorithm to tackle \cref{conditional_meta_learning_problem_1}. 



\section{The Advantage of Conditional Representation Learning}
\label{The advantage of Conditional Representation Learning}

In order to characterize the behavior of the optimal solution of 
\cref{conditional_meta_learning_problem_1} and to investigate the 
potential advantage of conditional meta-learning, we analyze the 
generalization properties of a given conditioning
function $\tau$. 
Formally, we compare the error $\ee_\env(\tau)$ w.r.t. the optimal 
minimum risk
\begin{equation} \label{oracle}
\ee_\env^*  = \Exp_{\task \sim \rho} ~ \cR_\task (w_\task)
\quad \qquad 
w_\task = \argmin_{w \in \Real^d} ~ \cR_\task (w).
\end{equation}
In order to do this, we first need to introduce the following standard 
assumptions used also in previous literature. Throughout this work we 
will denote by $\cdot \trans$ the standard transposition operation. 

\begin{assumption}\label{ass_1}
Let $\ell$ be a convex and $L$-Lipschitz loss function in the first argument. 
Additionally, there exist $\rx>0$ such that $\| x \| \le \rx$ for any $x\in\X$. 
\end{assumption}

\begin{theorem}[Excess Risk with Generic Conditioning Function $\tau$] \label{bound_fixed_feature}
Let \cref{ass_1} hold. For any $s \sim \ms$, introduce the conditional 
covariance matrices
\begin{align} \label{cov_matrices_cond}
\begin{split}
W(s) = \Exp_{\task \sim \env(\cdot|s)} w_\task w_\task \trans,
\quad \quad \quad 
C(s) = \Exp_{\task \sim \env(\cdot|s)} \Exp_{x \sim \eta_\task} x x \trans,
\end{split}
\end{align}
where, $\eta_\task$ denotes the inputs' marginal distribution of the
task $\task$. Let $\tau \in \T$ such that $\ran(W(s)) \subseteq \ran(\tau(s))$ 
for any $s \sim \ms$ and let $A(\tau(s), \cdot)$ be the associated inner 
algorithm from \cref{RERM_feature}. Then,
\begin{equation} \label{bound_general}
\begin{split}
\ee_\env(\tau) - \ee_\env^*  
\le \frac{\Exp_{s \sim \ms} \tr \big( \tau(s)^\dagger W(s) \big)}{2} 
+ \frac{2 L^2  \Exp_{s \sim \ms} \tr \big(\tau(s) C(s) \big)}{n}.
\end{split}
\end{equation}
\end{theorem}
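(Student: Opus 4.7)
The plan is to first derive a per-task excess risk bound for the within-task algorithm $A(\theta, Z_n)$ defined in \cref{RERM_feature}, and then integrate over $\mu \sim \env(\cdot|s)$ and $s \sim \ms$ using the definitions of $W(s)$ and $C(s)$. Throughout, I fix $s \in \Ss$, set $\theta = \tau(s)$, and let $\hat{w} = A(\theta, Z_n)$. The hypothesis $\ran(W(s)) \subseteq \ran(\tau(s))$ guarantees that for $\env(\cdot|s)$-almost every task $\mu$, the minimizer $w_\mu$ in \cref{oracle} lies in $\ran(\theta)$, so it is a valid comparator for the constrained ERM in \cref{RERM_feature}.

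The first step is a standard training-error inequality. By optimality of $\hat{w}$ and feasibility of $w_\mu$,
\begin{equation*}
\cR_{Z_n}(\hat{w}) + \tfrac{1}{2}\scal{\hat{w}}{\theta^\dagger \hat{w}} \le \cR_{Z_n}(w_\mu) + \tfrac{1}{2}\scal{w_\mu}{\theta^\dagger w_\mu}.
\end{equation*}
Taking expectation over $Z_n \sim \mu^n$, and dropping the non-negative quadratic term on the left-hand side, produces
\begin{equation*}
\Exp_{Z_n} \cR_{Z_n}(\hat{w}) \le \cR_\mu(w_\mu) + \tfrac{1}{2}\scal{w_\mu}{\theta^\dagger w_\mu}.
\end{equation*}

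The second (and hardest) step is to bound the generalization gap
\begin{equation*}
\Exp_{Z_n}\bigl[\cR_\mu(\hat{w}) - \cR_{Z_n}(\hat{w})\bigr] \le \frac{2L^2 \tr(\theta C_\mu)}{n},
\qquad C_\mu = \Exp_{x \sim \eta_\mu} xx\trans.
\end{equation*}
I would obtain this via a replace-one on-average stability argument: denoting by $\hat{w}^{(i)}$ the ERM produced by replacing the $i$-th sample of $Z_n$ with an i.i.d.\ copy, the strong convexity of the regularizer $\tfrac{1}{2}\scal{\cdot}{\theta^\dagger \cdot}$ on $\ran(\theta)$ together with $L$-Lipschitzness of $\ell$ in the first argument yields the preconditioned stability estimate $\|\hat{w} - \hat{w}^{(i)}\|_\theta \le 2L\|\theta^{1/2} x_i\|/n$. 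A standard symmetrization identity then turns this into the stated gap bound, with the trace $\tr(\theta C_\mu)$ arising from $\Exp_{x \sim \eta_\mu} \|\theta^{1/2}x\|^2$. This is the technically delicate part, since it crucially exploits both the linear structure of the hypothesis and the shape of the regularizer; alternatively, the same bound can be read off the regret of the preconditioned online procedure in \cref{online_algorithm_remark} followed by an online-to-batch conversion, at the cost of logarithmic factors.

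Summing the two inequalities gives the single-task bound
\begin{equation*}
\Exp_{Z_n} \cR_\mu(\hat{w}) - \cR_\mu(w_\mu) \le \tfrac{1}{2}\scal{w_\mu}{\theta^\dagger w_\mu} + \frac{2L^2 \tr(\theta C_\mu)}{n}.
\end{equation*}
Writing $\scal{w_\mu}{\theta^\dagger w_\mu} = \tr(\theta^\dagger w_\mu w_\mu\trans)$ and taking expectation first over $\mu \sim \env(\cdot|s)$ and then over $s \sim \ms$, the linearity of trace and the definitions in \cref{cov_matrices_cond} yield $\Exp_\mu \tr(\tau(s)^\dagger w_\mu w_\mu\trans) = \tr(\tau(s)^\dagger W(s))$ and $\Exp_\mu \tr(\tau(s) C_\mu) = \tr(\tau(s) C(s))$, which, after subtracting $\ee_\env^*$ on both sides, is exactly \cref{bound_general}.
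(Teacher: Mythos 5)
Your proposal is correct and follows essentially the same route as the paper: the same decomposition into the generalization gap of $A(\tau(s),\cdot)$, handled by a replace-one stability argument in the $\|\cdot\|_{\tau(s)}$ geometry (the paper's \cref{generalization_error_RERM}), plus the regularized-ERM optimality comparison against the feasible comparator $w_\task$, which yields the $\tfrac12\tr(\tau(s)^\dagger w_\task w_\task\trans)$ term, followed by integration over $\task\sim\env(\cdot|s)$ and $s\sim\ms$. The observation that $\ran(W(s))\subseteq\ran(\tau(s))$ forces $w_\task\in\ran(\tau(s))$ almost surely is also the same point the paper makes (stated there as an equivalence), so no substantive difference remains.
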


\begin{proof}
For any $(\task,s) \sim \env$, consider the decomposition
\begin{equation}
\ee_\env(\tau) - \ee_\env^*  = \Exp_{(\task, s) \sim \env} 
\big[ \text{B}_{\task,s} + \text{C}_{\task,s} \big], 
\end{equation}
with
\begin{align*}
\text{B}_{\task,s} & = \Exp_{\Zn \sim \task^n} ~ 
\Big [ \cR_\task(A(\tau(s), \Zn))  - \cR_\Zn(A(\tau(s), \Zn)) \Big ]
\text{C}_{\task,s} & = \Exp_{\Zn \sim \task^n} ~ \Big [ \cR_\Zn(A(\tau(s), \Zn)) 
- \cR_\task (w_\task) \Big].
\end{align*}
$\text{B}_{\task,s}$ is the generalization error of the inner algorithm 
$A(\tau(s),\cdot)$ on the task $\mu$. Hence, applying stability arguments 
(see \cref{generalization_error_RERM} in 
\cref{Generalization Bound of the Within-Task Algorithm}), we can write 
\begin{equation*}
\text{B}_{\task,s} \le \frac{2 L^2  \tr \big(\tau(s) \Exp_{x \sim \eta_\task} x x \trans \big)}{n}.
\end{equation*}
Regarding the term $\text{C}_{\task,s}$, for any conditioning function $\tau$ 
such that $w_\task \in \ran(\tau(s))$, we can write
\begin{equation*}
\begin{split}
\text{C}_{\task,s} 
& = \Exp_{\Zn \sim \task^n} ~ \Big [ \min_{w \in \Real^d: w \in \ran(\tau(s))} 
~ \cR_{\Zn,\tau(s)}(w) - \cR_\task (w_\task) \Big] \\
& \le \Exp_{\Zn \sim \task^n} ~ \Big [
~ \cR_{\Zn,\tau(s)}(w_\task) - \cR_\task (w_\task) \Big] \\
& = \frac{\tr \big( \tau(s)^\dagger w_\task w_\task \trans \big)}{2},
\end{split}
\end{equation*}
where, the second equality exploits the definition of the algorithm in \cref{RERM_feature} 
and the first inequality exploits the definition of minimum. The desired statement follows 
by combining the two bounds above, rewriting $\Exp_{(\task,s) \sim \env} = 
\Exp_{s \sim \ms} \Exp_{\task \sim \env(\cdot|s)}$ and observing that the constraint above 
on $\tau$ can be rewritten as follows
\begin{equation*}
\begin{split}w_\task \in \ran(\tau(s)) \text{ for any } (\task,s) \sim \env & \iff \ran(w_\task w_\task \trans) \subseteq \ran(\tau(s)) \text{ for any } (\task,s) \sim \env \\
& \iff \Exp_{\task \sim \env(\cdot|s)} ~ \big[ \ran(w_\task w_\task \trans) \big] 
\subseteq \ran(\tau(s)) \text{ for any } s \sim \ms \\
& \iff \ran \big( \Exp_{\task \sim \env(\cdot|s)} ~ \big[ w_\task w_\task \trans \big] \big)
\subseteq \ran(\tau(s)) \text{ for any } s \sim \ms,
\end{split}
\end{equation*}
where the second and the third equivalences derive from the fact 
that, for any matrices $A, B \in \psd$ and any scalar $c \neq 0$, 
$\ran(A) \subseteq \ran(A+B) = \ran(A) + \ran(B)$ and $\ran(cA) 
= \ran(A)$, see e.g. \cite{hogben2006handbook,hogben2013handbook}.
\end{proof}
\cref{bound_fixed_feature} suggests that the conditioning function $\tau_*$ minimizing the right hand side of \cref{bound_general} is a good candidate to solve the meta-learning problem. The following result explores this question by showing that such a minimizer admits a closed form solution. The proof  
is reported in \cref{proof_oracle_function}. In the following, we will denote by 
$\| \cdot \|_F$ and $\| \cdot \|_*$ the Frobenius and trace norm of a matrix, respectively.
\begin{restatable}[Best Conditioning Function in Hindsight]{proposition}{OracleFunction} \label{oracle_function}
The conditioning function minimizer and the minimum of the bound presented in
\cref{bound_fixed_feature} over the set 
\begin{equation*}
\left\{\tau \in \T ~~~\middle|~~~
\ran(W(s)) \subseteq \ran(\tau(s)),~ \ms\textrm{-almost surely} \right\},\end{equation*}
are respectively
\begin{equation*} \label{oracle_conditional}
\tau_\env(s) = 
\frac{\sqrt{n}}{2L}~C(s)^{\dagger/2} (C(s)^{1/2} W(s) C(s)^{1/2})^{1/2}C(s)^{\dagger/2}
\end{equation*}
and
\begin{equation} \label{optimal_conditional_bound}
\ee_\env(\tau_\env) - \ee_\env^*  \le 
\frac{2 L \Exp_{s \sim \ms} \big \| W(s)^{1/2} C(s)^{1/2} \big \|_*}{\sqrt{n}}.
\end{equation}
\end{restatable}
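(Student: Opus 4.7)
The plan is as follows. First I would observe that, because the functional
$\ee_\env(\tau) - \ee_\env^*$ is (upper bounded by) a sum of two $\ms$-expectations and the range constraint $\ran(W(s)) \subseteq \ran(\tau(s))$ is imposed pointwise $\ms$-almost surely, minimizing over $\tau \in \T$ reduces to minimizing the integrand at each fixed $s$ independently. Dropping $s$ from the notation, the core problem becomes
\begin{equation*}
\min_{\theta \in \psd,\, \ran(W) \subseteq \ran(\theta)}~
g(\theta) := \frac{1}{2}\,\tr(\theta^\dagger W) + \frac{2L^2}{n}\,\tr(\theta C).
\end{equation*}

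Next I would prove a matrix AM--GM lower bound $g(\theta) \geq \frac{2L}{\sqrt{n}}\|W^{1/2} C^{1/2}\|_*$. The main identities are
\begin{equation*}
\tr(\theta^\dagger W) = \bigl\| W^{1/2} \theta^{\dagger/2}\bigr\|_F^2,
\qquad
\tr(\theta C) = \bigl\| \theta^{1/2} C^{1/2}\bigr\|_F^2,
\end{equation*}
which follow from the cyclic property of the trace. The range condition guarantees $W^{1/2}\, \theta^{\dagger/2}\theta^{1/2} = W^{1/2}\, \Pi_{\ran(\theta)} = W^{1/2}$, so the Schatten H\"older inequality $\|XY\|_* \le \|X\|_F\|Y\|_F$ yields
\begin{equation*}
\bigl\|W^{1/2} C^{1/2}\bigr\|_* = \bigl\|(W^{1/2}\theta^{\dagger/2})(\theta^{1/2} C^{1/2})\bigr\|_*
\leq \bigl\|W^{1/2}\theta^{\dagger/2}\bigr\|_F \bigl\|\theta^{1/2} C^{1/2}\bigr\|_F.
\end{equation*}
Applying the scalar AM--GM $2\sqrt{ab}\,uv \leq a\,u^2 + b\,v^2$ with $a = 1/2$, $b = 2L^2/n$, $u = \|W^{1/2}\theta^{\dagger/2}\|_F$, $v = \|\theta^{1/2} C^{1/2}\|_F$ produces exactly $g(\theta) \geq \frac{2L}{\sqrt{n}}\|W^{1/2}C^{1/2}\|_*$.

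Then I would verify that the proposed
\begin{equation*}
\theta_* = \frac{\sqrt{n}}{2L}\, C^{\dagger/2} \bigl(C^{1/2} W C^{1/2}\bigr)^{1/2} C^{\dagger/2}
\end{equation*}
attains this lower bound by direct substitution. Setting $M = C^{1/2} W C^{1/2}$ and using the cyclic trace, one checks that $\tr(\theta_*^\dagger W) = \tfrac{2L}{\sqrt{n}}\, \tr(M^{1/2})$ and $\tr(\theta_* C) = \tfrac{\sqrt{n}}{2L}\, \tr(M^{1/2})$, so the two terms of $g(\theta_*)$ equalize to $\tfrac{L}{\sqrt{n}}\,\tr(M^{1/2})$ each, and their sum is $\tfrac{2L}{\sqrt{n}}\,\tr(M^{1/2}) = \tfrac{2L}{\sqrt{n}}\|W^{1/2}C^{1/2}\|_*$. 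Integrating the pointwise identity over $s \sim \ms$ and combining with \cref{bound_fixed_feature} yields the claimed bound \cref{optimal_conditional_bound}.

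The main obstacle is handling the pseudoinverses rigorously. The identity $W^{1/2}\theta^{\dagger/2}\theta^{1/2} = W^{1/2}$ truly needs $\ran(W) \subseteq \ran(\theta)$ (the feasibility constraint), and well-posedness of $\theta_*$ requires $\ran(W) \subseteq \ran(C)$ at the relevant $s$; this latter inclusion holds without loss of generality because any component of $w_\task$ orthogonal to $\ran(C(\task))$ contributes zero to the prediction and hence can be removed, so $w_\task \in \ran(C(\task)) \subseteq \ran(C(s))$ for $\env(\cdot|s)$-a.e.\ $\task$ and thus $\ran(W(s)) \subseteq \ran(C(s))$. Once this is granted, the computations above go through verbatim and no separate invertibility argument is needed.
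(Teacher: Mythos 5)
Your proposal is correct and reaches the statement by a genuinely different, more elementary route than the paper. You share with the paper the reduction to a pointwise problem in $s$ and the justification of $\ran(W(s))\subseteq\ran(C(s))$ by projecting $w_\task$ onto $\ran(C_\task)$ (this is exactly the paper's \cref{ortoghonal_w_task} and \cref{cor:conditional-covariance-ranges}). The divergence is in how the matrix problem $\min_\theta \tfrac12\tr(\theta^\dagger W)+\tfrac{2L^2}{n}\tr(\theta C)$ is solved: the paper proves \cref{prop:generalization-variational-trace-norm} by the change of variables $X=B^{1/2}\theta B^{1/2}$, the variational characterization of the trace norm, a Schur-complement lemma on projectors and pseudoinverses (\cref{lemma:orthogonal-projections-and-pseudoinverse}), and minimizing-sequence/continuity arguments to handle rank deficiency; you instead certify the optimal value directly via $\tr(\theta^\dagger W)=\nor{W^{1/2}\theta^{\dagger/2}}_F^2$, $\tr(\theta C)=\nor{\theta^{1/2}C^{1/2}}_F^2$, the Schatten--H\"older inequality $\nor{XY}_*\le\nor{X}_F\nor{Y}_F$ (which is where the range constraint enters), scalar AM--GM, and then verify attainment at $\tau_\env(s)$ by substitution. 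Your route is shorter and self-contained, and it produces an explicit lower-bound certificate; the paper's route additionally yields uniqueness of the pointwise minimizer, which it then exploits in its measurability argument.

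Two points need tightening. First, the attainment step is not quite a bare cyclic-trace computation when $W(s)$ is rank deficient (the typical case): $\bigl(C^{\dagger/2}M^{1/2}C^{\dagger/2}\bigr)^\dagger$ is \emph{not} $C^{1/2}M^{\dagger/2}C^{1/2}$ in general, so $\tr(\theta_*^\dagger W)=\tfrac{2L}{\sqrt n}\tr(M^{1/2})$ requires a short extra argument. The identity is true and can be obtained, e.g., by writing $W=\bigl(C^{\dagger/2}M^{1/2}C^{1/2}\bigr)\theta_*/c$ with $c=\sqrt{n}/(2L)$, using cyclicity to reduce to $\tr\bigl(\theta_*\theta_*^\dagger\,C^{\dagger/2}M^{1/2}C^{1/2}\bigr)$, and noting that $\theta_*\theta_*^\dagger$ is the orthogonal projection onto $\ran(\theta_*)=C^{\dagger/2}\ran(M)=\ran(W)$ (which also settles feasibility of $\theta_*$); alternatively one can argue by a limiting argument as the paper does. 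Second, the statement is a minimum over measurable $\tau\in\T$, so you must check $\tau_\env\in\T$; the paper does this via Aumann's measurable selection principle plus uniqueness, whereas with your explicit formula it suffices to observe that $s\mapsto W(s),C(s)$ are measurable and that matrix square roots and pseudoinverses are Borel maps, so the composition is measurable. Neither point changes the structure of your argument.
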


This result allows us to quantify the benefits of adopting the conditional feature learning strategy. 

\paragraph{Conditional Vs. Unconditional Meta-Learning}
Applying \cref{oracle_function} to $\T^{\rm const}$, we obtain the excess risk bound for unconditional meta-learning
\begin{equation} \label{optimal_unconditional_bound}
\ee_\env(\theta_\env) - \ee_\env^*  \le 
\frac{2 L \big \| W_\env^{1/2} C_\env^{1/2} \big \|_*}{\sqrt{n}},
\end{equation}
achieved for $\tau(s) \equiv \theta_\rho$ the meta-parameter
\begin{equation} \label{oracle_unconditional}
\theta_\env = \frac{\sqrt{n}}{2L} C_\env^{\dagger/2} (C_\env^{1/2} 
W_\env C_\env^{1/2})^{1/2}C_\env^{\dagger/2},
\end{equation}
with unconditional covariance matrices
\begin{equation} \label{cov_matrices}
W_\env = \Exp_{\task \sim \env} w_\task w_\task \trans,
\quad \quad
C_\env = \Exp_{\task \sim \env} \Exp_{x \sim \eta_\task} x x \trans.
\end{equation}

We observe that in the previous literature \cite{denevi2018,denevi2019online} the 
authors restricted the unconditional problem over the smaller class of linear 
representation $\hat \Theta = \{ \theta \in \psd: \ran(W_\env) \subseteq \ran(\theta), 
\tr(\theta) \le 1 \}$ and they considered as the best unconditional representation, 
the matrix minimizing only a part of the previous bound, namely, 
\begin{equation} \label{previous_unconditional_oracle}
\hat \theta_\env = \argmin_{\theta \in \hat \Theta}
\tr \big( \theta^\dagger W_\env \big)
= \frac{W_\env^{1/2}}{\tr \big( W_\env^{1/2} \big)}.
\end{equation}
On the other hand, the unconditional oracle we introduce above in 
\cref{oracle_unconditional} allows us to recover a tighter bound which
is able to recover the best performance between independent task learning 
(ITL) and the oracle considered in previous literature \cite{denevi2019online}. 
Indeed, by exploiting the duality between the trace norm $\| \cdot \|_*$
and the operator norm $\| \cdot \|_\infty$ of a matrix, we can upper 
bound the right-side-term in \cref{optimal_unconditional_bound} 
by the quantity 
\begin{equation*}
\frac{2 L \min \Big \{ \big \| W_\env^{1/2} \big \|_* \big \| C_\env^{1/2} \big \|_\infty,
\big \| W_\env^{1/2} \big \|_F \big \| C_\env^{1/2} \big \|_F \Big \}}{\sqrt{n}},
\end{equation*}
namely, the minimum between the bound for independent task learning
and the bound for unconditional oracle obtained by previous authors.
Notice that the unconditional quantity in \cref{optimal_unconditional_bound} 
is always bigger than the conditional quantity in \cref{optimal_conditional_bound},
since \cref{optimal_unconditional_bound} coincides with the minimum over
a smaller class of function. In order to quantify the gap between 
these two quantities -- namely, the advantage in using the conditional approach w.r.t. 
the unconditional one -- we have to compare the term $\big \| W_\env^{1/2} C_\env^{1/2} \big \|_*$ 
with the term $\Exp_{s \sim \ms} \big\| C(s)^{1/2} W(s)^{1/2} \big \|_*$.

We report below a setting that can be considered illustrative 
for many real-world scenarios in which such a gap in performance 
is significant. We refer to \cref{example_proof} for the details and the 
deduction. 

\begin{restatable}[Clusters]{example}{ClustersExample} 
\label{clusters_ex}
Let $\Ss = \Real^q$ be the side information space, for some integer $q > 0$.
Let $\env$ be such that the side information marginal distribution $\env_\Ss$ 
is given by a uniform mixture of $m$ uniform distributions. More precisely, let 
$\env_\Ss = \frac{1}{m} \sum_{i=1}^m \env_\Ss^{(i)}$, with $\env_\Ss^{(i)} 
= \mathcal{U} \big( \mathcal{B}(a_i, 1/2) \big)$ the uniform distribution on 
the ball of radius $1/2$ centered at $a_i \in \Ss$, characterizing 
the cluster $i$. For a given side information $s$, a task $\mu \sim \env(\cdot|s)$ 
is sampled such that: $1)$ its inputs' marginal $\eta_\task$ is a distribution 
with constant covariance matrix $C(s) = \Exp_{\mu \sim \env(\cdot|s)} 
\Exp_{x \sim \eta_\mu} x x \trans = C$, for some $C \in \psd$, 
$2)$ $w_\task$ is sampled from a distribution with conditional covariance 
matrix $W(s) = \Exp_{\mu \sim \env(\cdot|s)} w_\task w_\task \trans$, with 
$W(s)$ such that ($C^{1/2} W(s) C^{1/2}) (C^{1/2} W(p) C^{1/2}) = 0$ 
if $s \ne p$.
Then, 
\begin{equation*}
\Exp_{s \sim \ms} \big \| C(s)^{1/2} W(s)^{1/2} \big \|_* 
= \frac{1}{\sqrt{m}} \big \| W_\env^{1/2} C_\env^{1/2} \big \|_*.
\end{equation*}
\end{restatable}

The inequality above tells us that, in the setting of \cref{clusters_ex}, the conditional 
approach gains a $\sqrt{m}$ factor in comparison to the unconditional
approach. Therefore, the larger the number of clusters is, the more pronounced the advantage of conditional approach w.r.t. the 
unconditional one will be. We observe that a particular case of the setting 
above could be that one in which $q = 1$ and the side information
are \emph{noisy} observations of the index of the cluster the tasks 
belong to. In our experiments, in \cref{experiments}, we consider
a more interesting and realistic variant of the setting above, in which
we will use as task's side information a training dataset sampled from 
that task.
In the next section, we introduce a convex meta-algorithm mimicking this 
advantage also in practice.


\section{Conditional Representation Meta-Learning Algorithm}
\label{Conditional Representation Meta-Learning Algorithm}

To tackle conditional meta-learning in practice we consider a parametrization where the conditioning 
functions that are modeled w.r.t. a given feature map $\Phi:\Ss\to\R^k$ (with $k \in \N$) 
on the side information space. In other words, we consider $\tau: \Ss \to \psd$,
\begin{equation} \label{parametrization}
\tau(\cdot) = \big( M \Phi(\cdot) \big) \trans M \Phi(\cdot) + C,
\end{equation}
for some tensor $M \in \Real^{p \times d \times k}$ ($p \in \N$) and 
matrix $C \in \psd$.

By construction, the above parametrization guarantees us to learn functions taking values in the set of positive semi-definite matrices. However, directly addressing the meta-learning problem poses two issues: first, dealing with tensorial structures might become computationally challenging in practice and second, such parametrization is quadratic in $M$ and would lead to a non-convex optimization functional in practice. To tackle this issue, the following results shows that we can equivalently rewrite the 
conditioning function in the form of \cref{parametrization} by using a matrix in $\mathbb{S}_+^{dk}$.
This will allows us to implement our method working with matrices 
in $\mathbb{S}_+^{dk}$, instead of tensors in $\Real^{p \times d \times k}$.
Throughout this work, we will denote by $\otimes$ the Kronecker  
product.

\begin{restatable}[Matricial Re-formulation of $\tau_M(s)$]{proposition}{MatricialRewriting} 
\label{matricial_rewriting}
Let $\tau$ be as in \cref{parametrization}. Then, 
\begin{equation}
\tau(s) = \big( I_d \otimes \Phi(s) \trans \big) H_M \big( I_d \otimes \Phi(s) \big) + C,
\end{equation}
where $I_d$ is the identity in $\Real^{d \times d}$ and $H_M$ is
the matrix in $\Real^{dk \times dk}$ defined by the entries
\begin{equation*}
\big( H_M \big)_{(i-1)k + h, (j-1)k +z} = \big \langle 
M(:,i,h), M(:,j,z) \big \rangle 
\end{equation*}
with $i, j = 1, \dots, d$ and $h, z = 1, \dots, k$. 
\end{restatable}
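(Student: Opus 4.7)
The plan is to verify the identity by a direct entrywise computation, exploiting the fact that both sides are $d\times d$ matrices and the identity is equivalent (after subtracting $C$) to matching the $(i,j)$ entries of $(M\Phi(s))^{\top}(M\Phi(s))$ and $(I_d\otimes\Phi(s)^{\top})H_M(I_d\otimes\Phi(s))$. I would first unpack the tensor convention: interpret $M\Phi(s)\in\mathbb{R}^{p\times d}$ as the contraction of $M$ against $\Phi(s)$ along the third mode, so that $(M\Phi(s))_{\alpha,i}=\sum_{h=1}^k M(\alpha,i,h)\Phi(s)_h$. A direct expansion then gives
\begin{equation*}
\bigl[(M\Phi(s))^{\top}M\Phi(s)\bigr]_{ij}
=\sum_{h,z=1}^k \Phi(s)_h\,\Phi(s)_z \sum_{\alpha=1}^p M(\alpha,i,h)M(\alpha,j,z)
=\sum_{h,z=1}^k \Phi(s)_h\,\Phi(s)_z\,\bigl\langle M(:,i,h),M(:,j,z)\bigr\rangle.
\end{equation*}

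Next I would compute the right-hand side by exploiting the block structure of Kronecker products with the identity. Writing $A:=I_d\otimes\Phi(s)^{\top}\in\mathbb{R}^{d\times dk}$ and $B:=I_d\otimes\Phi(s)\in\mathbb{R}^{dk\times d}$, the block form gives $A_{i,(j-1)k+h}=\delta_{ij}\Phi(s)_h$ and $B_{(i-1)k+h,j}=\delta_{ij}\Phi(s)_h$. Substituting these into the double sum for $(AH_MB)_{ij}$, the Kronecker deltas collapse the outer block indices and leave exactly
\begin{equation*}
\bigl[(I_d\otimes\Phi(s)^{\top})H_M(I_d\otimes\Phi(s))\bigr]_{ij}
=\sum_{h,z=1}^k \Phi(s)_h\,\Phi(s)_z\,(H_M)_{(i-1)k+h,(j-1)k+z},
\end{equation*}
which by the definition of $H_M$ coincides with the expression derived above for $[(M\Phi(s))^{\top}M\Phi(s)]_{ij}$. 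Adding $C$ to both sides yields the claim.

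The argument is essentially book-keeping; the only step requiring care is fixing a consistent convention for how the tensor $M$ acts on $\Phi(s)$ and then matching that convention to the indexing $(i-1)k+h$ used to flatten the pairs $(i,h)$ into rows (respectively columns) of $H_M$. Once the flattening is chosen so that the second mode of $M$ indexes the ``outer'' block and the third mode indexes the ``inner'' block, the block-diagonal structure of $I_d\otimes\Phi(s)^{\top}$ and $I_d\otimes\Phi(s)$ automatically picks out the correct entries of $H_M$, so no further technicalities arise. I would end with a brief remark that, as a by-product of the identification $(H_M)_{(i-1)k+h,(j-1)k+z}=\langle M(:,i,h),M(:,j,z)\rangle$, the matrix $H_M$ is manifestly positive semidefinite (it is a Gram matrix of the vectors $M(:,i,h)\in\mathbb{R}^p$), which is consistent with $\tau(s)-C\in\mathbb{S}^d_+$.
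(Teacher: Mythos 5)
Your proposal is correct and follows essentially the same route as the paper: a direct entrywise expansion of $(M\Phi(s))^{\top}M\Phi(s)$ and of $(I_d\otimes\Phi(s)^{\top})H_M(I_d\otimes\Phi(s))$, with the two matched through the flattening $(i,h)\mapsto(i-1)k+h$. Your closing observation that $H_M$ is a Gram matrix (hence positive semidefinite) also matches the paper's argument, which writes $H_M=A_M^{\top}A_M$ with $A_M(:,(i-1)k+h)=M(:,i,h)$.
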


The arguments above motivate us to consider the following set of 
conditioning functions:
\begin{equation}\label{eq:feature-space-T}
\begin{split}
\T_\Phi = \Big \{~ \tau(\cdot) 
= \big( I_d \otimes \Phi(\cdot) \trans \big) H \big( I_d \otimes \Phi(\cdot) \big) + C ~\Big| 
\textrm{such that}~ H \in \mathbb{S}_+^{dk}, C \in \psd \Big \}.
\end{split}
\end{equation}
To highlight the dependency of a function $\tau \in \T_\Phi$ w.r.t. its parameter $H$
and $C$, we will denote $\tau = \tau_{H,C}$. Evidently, $\T_\Phi$ 
contains the space of all unconditional estimators $\T^{\rm const}$.
We consider $\T_\Phi$ equipped with the canonical norm $\nor{\tau_{H,C}}^2 
= \nor{(H,C)}_F^2 = \nor{H}_F^2 + \nor{C}_F^2$, where, recall, $\| \cdot \|_F$ 
denotes the Frobenius norm. The following two standard assumptions will 
allow us to design and analyse our method.
\begin{assumption} \label{ass_2}
The optimal function $\tau_\rho$ belongs to $\T_\Phi$, namely there 
exist $H_\env \in \mathbb{S}_+^{dk}$ and $C_\env \in \psd$, such that 
$\tau_\env(\cdot) = \tau_{H_\env,C_\env}(\cdot) = \big( I_d \otimes \Phi(\cdot) 
\trans \big) H_\env \big( I_d \otimes \Phi(\cdot) \big) + C_\env$.
\end{assumption}

\begin{assumption} \label{ass_3}
There exists $K>0$ such that $\| \Phi(s) \| \le K$ for any $s\in\Ss$. 
\end{assumption} 

Here, \cref{ass_2} allows us to restrict the conditional meta-learning problem
in \cref{conditional_meta_learning_problem_1} to $\T_\Phi$, rather than 
to the entire space $\T$ of measurable functions, while 
\cref{ass_3} ensures that 
the meta-objective is Lipschitz (see below).

\paragraph{The Convex Surrogate Problem}
We start from observing that, exploiting the generalization properties of the
within-task algorithm (see \cref{generalization_error_RERM} in 
\cref{Generalization Bound of the Within-Task Algorithm}), for any $\tau$, 
we can write the following
\begin{equation*}
\begin{split}
\Exp_{\Zn \sim \task^n} ~ \Big [ \cR_\task(A(\tau(s), \Zn))  \Big ] \le 
& \Exp_{\Zn \sim \task^n} ~ \Big [ \cR_\Zn(A(\tau(s), \Zn)) \Big ] + 
\frac{2 L^2  \tr \big(\tau(s) \Exp_{x \sim \eta_\task} x x \trans \big)}{n} \\
& \le \Exp_{\Zn \sim \task^n} ~ \Big [ \cR_{\Zn,\tau(s)}(A(\tau(s), \Zn)) \Big ] + 
\frac{2 L^2  \tr \big(\tau(s) \Exp_{x \sim \eta_\task} x x \trans \big)}{n},
\end{split}
\end{equation*}
where in the second inequality we have exploited the fact that 
the within-task regularizer is non-negative. Consequently, 
by taking the expectation w.r.t. $(\task,s) \sim \env$ and exploiting 
the fact that the points are i.i.d., 
we get 
\begin{equation}
\begin{split}
\ee_\env(\tau) \le 
\Exp_{(\task,s) \sim \env} ~ \Exp_{\Zn \sim \task^n} ~ \bigg[ \cR_{\Zn,\tau(s)}(A(\tau(s), \Zn)) 
+ \frac{2 L^2}{n} \tr \Big(\tau(s) \frac{X \trans X}{n} \Big) \bigg], 
\end{split}
\end{equation}
where $X \in \Real^{n \times d}$ is the matrix with the inputs vectors 
$(x_i)_{i = 1}^n$ as rows.
The inequality above suggests us to introduce the surrogate problem 
\begin{equation} \label{surrogate}
\begin{split}
\min_{\tau \in \T} ~  \hat \ee_\env(\tau) 
\quad \quad \hat \ee_\env(\tau) = \Exp_{(\task,s) \sim \env} ~ 
\Exp_{\Zn \sim \task^n} ~ \bigg[ 
\cR_{\Zn,\tau(s)}(A(\tau(s), \Zn)) 
+ \frac{2 L^2}{n} \tr \Big(\tau(s) \frac{X \trans X}{n} \Big) \bigg],
\end{split}
\end{equation}
where, from the last inequality above, for any $\tau$, we have
\begin{equation} \label{majorization}
\ee_\env(\tau) \le \hat \ee_\env(\tau).
\end{equation}
We stress that the surrogate problem we take here is 
different from the one considered in previous work 
\cite{denevi2019learning,denevi2019online,hazan19}, where the
authors considered as meta-objective only a part of the function 
above, namely, $\Exp_{(\task,s) \sim \env} ~ \Exp_{\Zn \sim \task^n} ~ 
\big[ \cR_{\Zn,\tau(s)}(A(\tau(s), \Zn)) \big]$. 
As we will see in the following, 
such a choice is more 
appropriate for the problem at hand, since, differently from the meta-objective
used in previous literature,
it will allow us to develop a conditional meta-learning
method that is theoretically grounded also for linear representation learning. 

Exploiting \cref{ass_2}, the surrogate problem in \cref{surrogate} can 
be restricted to the class of linear functions $\T_\Phi$ in \cref{eq:feature-space-T} 
and it can be rewritten more explicitly as 
\begin{equation} \label{surrogate_linear}
\begin{split}
&\min_{H \in \mathcal{S}^{d k}, C \in \psd} ~ \Exp_{(\task, s) \sim \env}
~ \Exp_{\Zn \sim \task^n} ~ \LL\big (H, C, s, \Zn \big ) \\
\LL\big (H, C, s, \Zn \big ) = & ~ \cR_{\Zn, \tau_{H,C}(s)}(A(\tau_{H,C}(s),Z))
+ \frac{2 L^2}{n}  \tr \Big(\tau_{H,C}(s) \frac{X \trans X}{n} \Big).
\end{split}
\end{equation}
In the following proposition we outline some useful properties of the 
meta-loss $\LL\big (\cdot, \cdot, s, \Zn \big )$ introduced above 
(such as convexity) supporting its choice as surrogate meta-loss. 

\begin{restatable}[Properties of the Surrogate Meta-Loss $\LL$]{proposition}{PropertiesSurrogate} 
\label{properties_surrogate}
For any $\Zn \in \D$ and $s \in \Ss$, the function 
$\LL\big (\cdot, \cdot, s, \Zn \big )$ 
is convex and one of its subgradients is given,
for any $H \in \mathbb{S}_+^{dk}$ and $C \in \psd$, by
\begin{equation} \label{gradient1}
\begin{split}
& \nabla \LL\big (H, \cdot, s, \Zn \big )(C) = \hat \nabla \\
& \nabla \LL\big (\cdot, C, s, \Zn \big )(H) = \big( I_d \otimes \Phi(s) \big) \hat \nabla 
\big( I_d \otimes \Phi(s) \trans \big) 
\end{split}
\end{equation}
where
\begin{equation*}
\hat \nabla = - \frac{\la}{2} \tau_{H,C}(s)^\dagger w_{\tau_{H,C}(s)}
w_{\tau_{H,C}(s)} \trans \tau_{H,C}(s)^\dagger + \frac{2 L^2 X \trans X}{n^2}.
\end{equation*}
Moreover, under \cref{ass_1} and \cref{ass_3}, we have
\begin{equation*} \label{bound_grad_norm}
\big \| \nabla \LL\big (\cdot, \cdot, s, \Zn \big )(H,C)  \big \|_F 
\le (1+ K^2)(LR)^2 \Big( \frac{1}{2} + \frac{2}{n}\Big).
\end{equation*}
\end{restatable}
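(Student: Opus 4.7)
The plan is to establish the three claims in turn: convexity, the subgradient formula, and the norm bound. The key reduction is to view $\LL(H,C,s,\Zn)$ as the composition of the affine map $(H,C) \mapsto \theta(s) := (I_d \otimes \Phi(s)\trans) H (I_d \otimes \Phi(s)) + C$ with the outer map $\theta \mapsto V(\theta) + \frac{2L^2}{n} \tr(\theta\, X\trans X/n)$, where $V(\theta) = \cR_{\Zn,\theta}(A(\theta,\Zn))$ is the value function of the within-task problem \cref{RERM_feature}. Since affine composition preserves convexity and subgradients pull back through it by the chain rule, it suffices to analyse the outer map on $\psd$.

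For convexity, the trace term is linear in $\theta$, hence convex. For $V$, I rely on the classical fact that the matrix-fractional function $(w,\theta) \mapsto \langle w, \theta^\dagger w\rangle$, extended to $+\infty$ outside $\{w \in \ran(\theta)\}$, is jointly convex on $\R^d \times \psd$; this follows from the Schur-complement characterization of its epigraph, $\langle w, \theta^\dagger w\rangle \le t$ iff $\begin{pmatrix} \theta & w \\ w\trans & t \end{pmatrix} \succeq 0$. Adding $\cR_{\Zn}(w)$, which is convex in $w$ and independent of $\theta$, preserves joint convexity; partial minimization over $w$ then yields convexity of $V$, and hence of $\LL$.

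For the subgradient, I invoke the envelope principle for partial minimization of a jointly convex function: at any $\theta$ with inner optimizer $w^* = A(\theta,\Zn)$, the subdifferential of $V$ contains $\nabla_\theta[\tfrac{1}{2}\langle w^*, \theta^\dagger w^*\rangle]_{w^* \text{ fixed}} = -\tfrac{1}{2}\theta^\dagger w^*(w^*)\trans\theta^\dagger$, obtained from $d(\theta^\dagger) = -\theta^\dagger\, d\theta\, \theta^\dagger$. Adding the gradient $\tfrac{2L^2}{n^2} X\trans X$ of the trace term produces $\hat\nabla$. Pulling back through the affine map via $\tr(G \cdot (I_d \otimes \Phi(s)\trans) \Delta (I_d \otimes \Phi(s))) = \tr((I_d \otimes \Phi(s)) G (I_d \otimes \Phi(s)\trans)\, \Delta)$ gives the $H$-component as $(I_d \otimes \Phi(s)) \hat\nabla (I_d \otimes \Phi(s)\trans)$ and the $C$-component as $\hat\nabla$ itself, matching the stated formula.

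For the norm bound, I use the stationarity condition for $w^*$: differentiating \cref{RERM_feature} gives $\theta^\dagger w^* = -\tfrac{1}{n}\sum_{i=1}^n s_i x_i$ for some $s_i \in \partial\ell(\cdot,y_i)(\langle x_i, w^*\rangle)$, so $\|\theta^\dagger w^*\| \le LR$ by \cref{ass_1}. Since $\theta^\dagger$ is symmetric, $\theta^\dagger w^*(w^*)\trans\theta^\dagger = (\theta^\dagger w^*)(\theta^\dagger w^*)\trans$ has Frobenius norm $\|\theta^\dagger w^*\|^2 \le (LR)^2$, while $\|X\trans X\|_F \le \sum_i \|x_i\|^2 \le nR^2$ controls the trace-term contribution by $\tfrac{2(LR)^2}{n}$, giving $\|\hat\nabla\|_F \le (LR)^2(\tfrac{1}{2} + \tfrac{2}{n})$. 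Using $\|I_d \otimes \Phi(s)\|_\infty = \|\Phi(s)\| \le K$ from \cref{ass_3}, the $H$-block is bounded by $K^2 \|\hat\nabla\|_F$, so the combined Frobenius norm is at most $\sqrt{1+K^4}\,\|\hat\nabla\|_F \le (1+K^2)\|\hat\nabla\|_F$, yielding the claim. The main technical subtlety I anticipate is handling the pseudoinverse carefully when $\theta$ is rank-deficient: the envelope calculation is cleanest on $\{\theta \succ 0\}$ and must be extended to the boundary of $\psd$ by a limiting or subdifferential-calculus argument, which is acceptable because the claim identifies a subgradient rather than a full gradient.
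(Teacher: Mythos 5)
Your proof is correct, but it reaches the subgradient formula by a genuinely different route than the paper. You work entirely in the primal: joint convexity of the range-constrained matrix-fractional function $(w,\theta)\mapsto\tfrac12\scal{w}{\theta^\dagger w}$ via the Schur-complement epigraph, partial minimization to get convexity of the value function, and an envelope/Danskin argument at the inner minimizer $w^*=A(\theta,\Zn)$, differentiating $\theta\mapsto\theta^\dagger$ to obtain $-\tfrac12\theta^\dagger w^*(w^*)\trans\theta^\dagger$, then pulling back through the affine parametrization. The paper instead passes to the Fenchel dual of the within-task problem in \cref{RERM_feature}: the value function becomes a maximum over $\alpha\in\R^n$ of functions that are \emph{linear} in $\theta$ (no pseudoinverse appears), so a Danskin-type result (Lemma 44 of the cited prior work) immediately yields the subgradient $-\tfrac{1}{2n^2}X\trans\alpha_\theta\alpha_\theta\trans X+\tfrac{2L^2}{n^2}X\trans X$ for every $\theta\in\psd$, which is then converted to the stated primal form via the optimality relation $X\trans\alpha_\theta=-n\theta^\dagger w_\theta$. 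The trade-off is exactly the subtlety you flag at the end: your primal route is more self-contained (no dual problem, no external lemma), but the envelope step is only clean on $\{\theta\succ0\}$ and needs either the known subdifferential of the range-constrained matrix-fractional function or a careful verification of the joint-subgradient inequality at rank-deficient $\theta$ — note also that a limiting argument alone is delicate, since one should check the subgradient inequality for the explicit candidate rather than rely on limits of interior gradients; the dual route sidesteps all of this uniformly on $\psd$. Two further small points: your stationarity identity should read $\theta^\dagger w^*=-P\bigl(\tfrac1n\sum_i s_ix_i\bigr)$ with $P$ the projection onto $\ran(\theta)$ (the norm bound $\|\theta^\dagger w^*\|\le LR$ survives unchanged), and your norm bookkeeping (operator-norm bound $\|I_d\otimes\Phi(s)\|_\infty\le K$, then $\sqrt{1+K^4}\le 1+K^2$) matches the paper's constant, where the paper instead bounds the Frobenius norm by trace norms and uses traces of the four PSD blocks.
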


The proof of \cref{properties_surrogate} is reported in 
\cref{properties_surrogate_proof}. It follows from combining 
results from \cite{denevi2019online} with the composition of 
the linear parametrization of the functions $\tau_{H,C} \in \T_\Phi$.

\paragraph{The Conditional Meta-Learning Estimator} 
The meta-learning strategy we propose consists in applying Stochastic Gradient Descent (SGD) on the 
surrogate problem in \cref{surrogate_linear}. Such a meta-algorithm is implemented in \cref{meta_alg}: 
we assume to observe a sequence of i.i.d. pairs $(\Zn_t, s_t)_{t = 1}^T$ of training 
datasets and side information, and at each iteration we update the conditional parameters $(H_t,C_t)$ by 
performing a step of constant size $\gamma>0$ in the direction of $-\nabla\LL(\cdot,\cdot,s_t,\Zn_t)(H_t,C_t)$
and a projection step on $\mathbb{S}_+^{dk} \times \psd$. Finally, we output the conditioning function 
$\tau_{\thickbar H, \thickbar C}$ parametrized by $(\thickbar H,\thickbar C)$, the average across all the 
iterates $(H_t,C_t)_{t = 1}^T$. The theorem below analyzes the generalization properties of such a conditioning 
function.


\begin{algorithm}[tb]
\caption{Meta-Algorithm, SGD on \cref{surrogate_linear}}\label{meta_alg}
\begin{algorithmic}
\State   ~
   \State  {\bfseries Input} ~ $\gamma > 0$ meta-step size, $H_0 \in \mathbb{S}^{dk}_+$,
   $C_0 \in \psd$ 
   \vspace{.05cm}
   \State  {\bfseries Initialization} ~ $H_1 = H_0 \in \mathbb{S}^{dk}_+$, $C = C_0 \in \psd$
  \vspace{.04cm}
   \State  {\bfseries For} ~ $t=1$ to $T$
   \vspace{.05cm}
   \State  ~~  Receive $(\task_t, s_t) \sim \env$ and $\Zn_t \sim \task_t^n$
   \vspace{.05cm}
   \State  ~~ Let $\theta_t 
   ~=~ \big( I_d \otimes \Phi(s_t) \big) H_t 
   \big( I_d \otimes \Phi(s_t) \trans \big) + C_t$
   \State  ~~ Compute $w_{\theta_t} ~=~ A(\theta_t, \Zn_t)$ by \cref{RERM_feature}
   \vspace{.05cm}
   \State ~~ Compute $\nabla \LL (\cdot, C_t, s_t, \Zn_t  )(H_t)$ as in \cref{gradient1}
   with $w_{\theta_t}$
    \State ~~ Compute $\nabla \LL (H_t, \cdot, s_t, \Zn_t  )(C_t)$ as in \cref{gradient1}
   with $w_{\theta_t}$
   \State  ~~ Update $H_{t+1} = \proj_\Theta \big( H_t - \gamma 
   \nabla \LL (\cdot, C_t, s_t, \Zn_t)(H_t) \big)$
    \vspace{.05cm}
   \State  ~~ Update $C_{t+1} = \proj_\Theta \big( C_t - \gamma 
   \nabla \LL (H_t,\cdot, s_t, \Zn_t)(C_t) \big)$
   \vspace{.075cm}
 \State  {{\bfseries Return} ~ $\displaystyle {\thickbar H} = \frac{1}{T} \sum_{t=1}^T H_t$,
 $\displaystyle {\thickbar C} = \frac{1}{T} \sum_{t=1}^T C_t$} 
\State  ~
\end{algorithmic}
\end{algorithm}

\begin{restatable}[Excess Risk Bound for the Conditioning Function 
Returned by \cref{meta_alg}]{theorem}{BoundEstimatedFeature}\label{bound_estimated_feature}
Let \cref{ass_1} and \cref{ass_3} hold. For any $s \sim \ms$, 
recall the conditional covariance matrices $W(s)$ and $C(s)$ 
introduced in \cref{bound_fixed_feature}. Let $\tau_{H,C}$ be a 
fixed function in $\T_\Phi$ such that $\ran(W(s)) \subseteq \ran(\tau_{H,C}(s))$ 
for any $s \sim \ms$. Let $\thickbar H$ and $\thickbar C$ be the outputs of 
\cref{meta_alg} applied to a sequence $(\Zn_t, s_t)_{t = 1}^T$ of i.i.d. pairs 
sampled from $\env$ with 
meta-step size
\begin{equation}
\gamma ~=~ \frac{\nor{(H-H_0,C-C_0)}_F}{(1+K^2)(LR)^2} ~ 
\Big( \frac{1}{2} + \frac{2}{n} \Big)^{-1} ~ \frac{1}{\sqrt{T}}.
\end{equation}
Then, in expectation w.r.t. the sampling of $(\Zn_t, s_t)_{t = 1}^T$, 
\begin{equation*} \label{feature}
\begin{split}
\Exp ~ \ee_\env(\tau_{\thickbar H,\thickbar C}) - \ee_\env^* 
~\leq~ &\frac{\Exp_{s \sim \ms} \tr \big( \tau_{H,C}(s)^\dagger W(s) \big)}{2} 
+ \frac{2 L^2  \Exp_{s \sim \ms} \tr \big(\tau_{H,C}(s) C(s) \big)}{n} \\
& + \Big( \frac{1}{2} + \frac{2}{n} \Big) \frac{(1 + K^2) (LR)^2 \nor{(H-H_0,C-C_0)}_F}{\sqrt{T}}.
\end{split}
\end{equation*}
\end{restatable}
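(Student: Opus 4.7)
The proof combines three ingredients in the standard SGD-for-convex-functions style. First, \cref{majorization} gives $\ee_\env(\tau_{\thickbar H,\thickbar C}) \le \hat\ee_\env(\tau_{\thickbar H,\thickbar C})$, so it suffices to control the surrogate risk at the averaged iterate. Second, by \cref{properties_surrogate} the meta-loss $\LL(\cdot,\cdot,s,\Zn)$ is convex, and since the parametrization $\tau_{H,C}(s)$ is affine in $(H,C)$, the surrogate $\hat\ee_\env(\tau_{H,C}) = \Exp_{(\task,s)\sim\env}\Exp_{\Zn\sim\task^n}\LL(H,C,s,\Zn)$ is convex in $(H,C)$ as an expectation of convex functions. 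Applying Jensen to $\thickbar H = \frac{1}{T}\sum_t H_t$ and $\thickbar C = \frac{1}{T}\sum_t C_t$ yields, pointwise in the algorithm's randomness,
\begin{equation*}
\hat\ee_\env(\tau_{\thickbar H,\thickbar C}) \le \frac{1}{T}\sum_{t=1}^T \hat\ee_\env(\tau_{H_t,C_t}).
\end{equation*}
Third, since $(\task_t,s_t,\Zn_t)$ are i.i.d.\ from $\env$, the subgradient formula in \cref{gradient1} is, conditionally on the past, an unbiased estimator of a subgradient of $\hat\ee_\env$ at $(H_t,C_t)$.

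\textbf{SGD bound and step size.} A standard online-to-batch / projected-SGD analysis for convex functions with stochastic subgradients of norm at most $G := (1+K^2)(LR)^2(\tfrac{1}{2}+\tfrac{2}{n})$ (the bound in \cref{properties_surrogate}) gives, for any competitor $(H,C)\in\mathbb{S}_+^{dk}\times\psd$,
\begin{equation*}
\Exp\,\frac{1}{T}\sum_{t=1}^T \hat\ee_\env(\tau_{H_t,C_t}) - \hat\ee_\env(\tau_{H,C}) \le \frac{\|(H-H_0,C-C_0)\|_F^2}{2\gamma T} + \frac{\gamma G^2}{2}.
\end{equation*}
Setting $\gamma$ as in the theorem statement is precisely the choice minimizing the right-hand side in $\gamma$, and the resulting value $G\,\|(H-H_0,C-C_0)\|_F/\sqrt{T}$ matches the last term of the target bound.

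\textbf{Surrogate bound at the competitor.} It remains to show that $\hat\ee_\env(\tau_{H,C}) - \ee_\env^*$ is upper bounded by the first two terms of the statement. Fix $(\task,s)\sim\env$. By the assumption $\ran(W(s))\subseteq\ran(\tau_{H,C}(s))$, we have $w_\task\in\ran(\tau_{H,C}(s))$, so $w_\task$ is feasible in the minimization defining $A(\tau(s),\Zn)$ in \cref{RERM_feature}. Optimality gives
\begin{equation*}
\cR_{\Zn,\tau(s)}(A(\tau(s),\Zn)) \le \cR_{\Zn,\tau(s)}(w_\task) = \cR_\Zn(w_\task) + \tfrac{1}{2}\,w_\task\trans\tau(s)^\dagger w_\task.
\end{equation*}
Now $\Exp_{\Zn\sim\task^n}\cR_\Zn(w_\task)=\cR_\task(w_\task)$ and $\Exp_{\Zn}[X\trans X/n]=\Exp_{x\sim\eta_\task}xx\trans$; plugging these into the definition of $\hat\ee_\env$, averaging first over $\task\sim\env(\cdot|s)$ (producing $W(s)$ and $C(s)$ from \cref{bound_fixed_feature}) and then over $s\sim\ms$ yields
\begin{equation*}
\hat\ee_\env(\tau_{H,C}) - \ee_\env^* \le \tfrac{1}{2}\,\Exp_s\tr\!\bigl(\tau_{H,C}(s)^\dagger W(s)\bigr) + \tfrac{2L^2}{n}\,\Exp_s\tr\!\bigl(\tau_{H,C}(s)\,C(s)\bigr).
\end{equation*}
Chaining this with the SGD bound and \cref{majorization} concludes the argument.

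\textbf{Main obstacle.} The only nontrivial technical point is verifying that the random subgradient computed in \cref{meta_alg} via \cref{gradient1} is indeed an unbiased estimator of a subgradient of the deterministic convex functional $(H,C)\mapsto\hat\ee_\env(\tau_{H,C})$; this interchange of subdifferentiation and expectation is justified by the uniform Lipschitz bound from \cref{properties_surrogate} together with \cref{ass_3}. Once this is settled, the three building blocks above chain routinely to deliver the stated excess-risk inequality.
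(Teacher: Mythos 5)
Your proposal is correct and follows essentially the same route as the paper's proof: bound $\ee_\env$ by the surrogate $\hat\ee_\env$ via \cref{majorization}, control $\Exp\,\hat\ee_\env(\tau_{\thickbar H,\thickbar C})-\hat\ee_\env(\tau_{H,C})$ by the standard projected-SGD/average-iterate bound with the subgradient norm from \cref{properties_surrogate}, bound $\hat\ee_\env(\tau_{H,C})-\ee_\env^*$ using feasibility of $w_\task$ in \cref{RERM_feature} under the range condition, and optimize in $\gamma$. The paper packages the SGD step as a separate convergence proposition (citing the standard average-iterate analysis), but the substance is identical to your Jensen plus online-to-batch argument.
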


\begin{proof}[Proof (Sketch)]
The detailed proof is reported in \cref{feature_proof_final_thm}. Exploiting the 
fact that, for any $\tau \in \T$, $\ee_\env(\tau) \le \hat \ee_\env(\tau)$ 
(see \cref{majorization}) and adding $\pm \hat \ee_\env(\tau_{H,C})$, 
we can write the following 
\begin{equation}
\begin{split}
\Exp_{\bf Z} ~ \ee_\env(\tau_{\thickbar H,\thickbar C}) - \ee_\env^*  
\le & \underbrace{\Exp_{\bf Z} ~ \hat \ee_\env(\tau_{\thickbar H,\thickbar C}) 
- \hat \ee_\env(\tau_{H,C})}_{\text{A}(\tau_{H,C})} 
+ \underbrace{\hat \ee_\env(\tau_{H,C}) - \ee_\env^* }_{\text{B}(\tau_{H,C})}.
\end{split}
\end{equation}
The term $\text{A}(\tau_{H,C})$ can be controlled according to the convergence 
properties of the meta-algorithm in \cref{meta_alg} as described in 
\cref{convergence_surrogate}. Regarding the term $\text{B}(\tau_{H,C})$, 
exploiting the definition of the within-task algorithm in \cref{RERM_feature}
as minimum, for any $\tau \in \T$ such that 
$\ran(\Exp_{\task \sim \env(\cdot|s)} w_\task 
w_\task \trans) \subseteq \ran(\tau(s))$ for any $s \sim \ms$, we can rewrite 
\begin{equation*}
\begin{split}
\text{B}(\tau) 
& =  \Exp_{(\task,s) \sim \env} ~ \Exp_{\Zn \sim \task^n} ~ \Big[ \cR_{\Zn,\tau(s)}(A(\tau(s), \Zn))
- \cR_\task(w_\task) \Big] 
+ \frac{2 L^2  \Exp_{(\task,s) \sim \env} ~ \tr \big(\tau(s) \Exp_{x \sim \eta_\task} x x \trans \big)}{n} \\
& \le \frac{\Exp_{(\task,s) \sim \env} ~ \tr \big( \tau(s)^\dagger w_\task w_\task \trans \big)}{2} 
+ \frac{2 L^2  \Exp_{(\task,s) \sim \env} ~ \tr \big(\tau(s) \Exp_{x \sim \eta_\task} x x \trans \big)}{n}.
\end{split}
\end{equation*}
The desired statement then derives from combining the two parts above and 
optimizing w.r.t. $\gamma$.
\end{proof}
We now present some important implications of
\cref{bound_estimated_feature}. 

\paragraph{Proposed Vs. Optimal Conditioning Function}
Specializing the bound in \cref{bound_estimated_feature}
to the best conditioning function $\tau_\env$ in 
\cref{oracle_function}, thanks to \cref{ass_2}, 
we get the following bound for our estimator,
\begin{equation*}
\begin{split}
\Exp ~ \ee_\env(\tau_{\thickbar H,\thickbar C}) - \ee_\env^* & ~\leq~
\mathcal{O} \big( \Exp_{s \sim \ms} \big \| W(s)^{1/2} C(s)^{1/2} \big \|_* ~ n^{-1/2} \big) 
+ \mathcal{O} \big( \|(H_\env - H_0,C_\env-C_0) \|_F ~ T^{-1/2} \big).
\end{split}
\end{equation*}
From such a bound, we can state that our proposed meta-algorithm achieves 
comparable performance to the best conditioning function $\tau_\env$ in hindsight, 
when the number of observed tasks is sufficiently large. Moreover, 
recalling the unconditional oracle $\hat \theta_\env$ in \cref{previous_unconditional_oracle}
used in previous literature, regarding the second term vanishing with $T$, we observe that our 
conditional meta-learning approach incurs a cost of $\|(H_\env-H_0,C_\env-C_0) \|_F T^{-1/2}$ 
as opposed to the cost of $\| \hat \theta_\env-\theta_0 \| T^{-1/4}$ associated to 
state-of-the-art unconditional meta-learning approaches (see 
\cite{denevi2019online,balcan2019provable,khodak2019adaptive,hazan19}).
Thus, our conditional approach presents a faster convergence 
rate w.r.t. $T$ than such unconditional 
methods, but a complexity term that is expected to be larger 
due to the larger complexity of the class of functions we are working
with. 
Such a faster rate w.r.t. $T$ 
is essentially due to our formulation of the problem on
the entire set of positive-semidefinite matrices (with no trace constraints).
This in fact allows us to 
incorporate the within-task regularization parameter 
$\la$ directly in the linear representation and to gain a $\sqrt{T}$ order 
that was lost in previous literature when tuning w.r.t. the parameter $\la$. At the same time, 
this allows us to develop also a method requiring to tune just one hyper-parameter, 
while previous unconditional approaches requires to tune two hyper-parameters.

\paragraph{Comparison to Unconditional Meta-Learning}
Specializing \cref{bound_estimated_feature} to the best unconditional 
estimator $\tau_{H,C} \equiv \theta_\env$ we introduced in \cref{oracle_unconditional}, 
the bound for our estimator becomes
\begin{equation} \label{unconditional_retrieved}
\begin{split}
\Exp ~ \ee_\env(\tau_{\thickbar H,\thickbar C}) - \ee_\env^*  ~\leq~
\mathcal{O} \big( \big \| W_\env^{1/2} C_\env^{1/2} \big \|_* ~ n^{-1/2} \big) 
+ \mathcal{O} \big( \| \theta_\env - C_0 \| ~ T^{-1/2}\big).
\end{split}
\end{equation}
From the bound above, we can conclude 
that the conditional approach provides, at least, the 
same guarantees as its unconditional counterpart.
Moreover, 
we stress again that the bound 
above presents a faster rate w.r.t. $T$ in comparison 
to the state-of-the-art unconditional methods. 


%
\begin{remark}[Online Variant of \cref{RERM_feature}] \label{approx_meta_sub}
Also in this case, as already observed for the bias regularization and 
fine tuning framework proposed in \cite{denevi2020advantage}, when 
we use the online inner family in \cref{online_algorithm_remark}, we 
can approximate the meta-subgradient in \cref{gradient1} by replacing 
the batch regularized empirical risk minimizer $A (\tau_{H, C}(s), \Zn )$ 
in \cref{RERM_feature} with the last iterate of the online algorithm in 
\cref{online_inner_algorithm}. 
\end{remark}
%


\section{Experiments}
\label{experiments}

We now present preliminary experiments in which we compare the 
proposed conditional meta-learning approach in \cref{meta_alg}  (cond.) 
with the unconditional counterpart (uncond.) and solving the tasks independently 
(ITL, namely, running the inner algorithm separately across the tasks 
with the constant linear representation $\theta = I_d \in \psd$).
We considered regression problems and we evaluated the errors 
by $\ell$ the absolute loss. We implemented the online variant of the 
within-task algorithm introduced in \cref{online_inner_algorithm}. 
The hyper-parameter $\gamma$ was chosen by (meta-)cross validation on separate $T_{\rm tr}$, $T_{\rm va}$ and $T_{\rm te}$ respectively meta-train, -validation and -test sets. Each task
is provided with a training dataset $\Zn_{\rm tr}$ of $n_{\rm tr}$ points 
and a test dataset $\Zn_{\rm te}$ of $n_{\rm te}$ points used to evaluate 
the performance of the within-tasks algorithm. In \cref{experimental_details} 
we report the details of this process in our experiments.

\paragraph{Synthetic Clusters} 
We considered two variants of the setting described in \cref{clusters_ex} with side information corresponding to the training datasets $Z_{\rm tr}$ associated to each task. In both settings, we sampled $T_{\rm tot} = 900$ tasks 
from a uniform mixture of $m$ clusters. For each task $\task$, 
we generated the target vector $w_\task \in \Real^d$ with $d = 20$ as $w_\task = P(j_\task) \tilde w_\task$, where, $j_\task \in \{1, \dots, m\}$ denotes the cluster from which the task $\task$ was sampled and with the components 
of $\tilde w_\task \in\R^{d/(10)}$ sampled from the Gaussian distribution $\mathcal{G}(0,1)$ 
and then $\tilde w_\task$ normalized to have unit norm, with $P(j_\task) \in \R^{d \times d/(10)}$ 
a matrix with orthonormal columns. We then generated the corresponding dataset 
$(x_i,y_i)_{i=1}^{n_{\rm tot}}$ with $n_{\rm tot} = 80$ according to the linear equation 
$y = \langle x, w_\task \rangle + \epsilon$, with $x$ sampled uniformly on the unit sphere 
in $\mathbb{R}^d$ and $\epsilon$ sampled from a Gaussian  distribution, $\epsilon \sim \mathcal{G}(0,0.1)$. In this setting, the operator norm of the inputs' covariance matrix is small 
(equal to $1/d$) and the weight vectors' covariance matrix of each single cluster is low-rank 
(its rank is $d/(10) = 2$). We implemented our conditional method using the feature map 
$\Phi:\D \to \Real^{2d}$ defined by $\Phi(\Zn) = \frac{1}{{n_{\rm tr}}} 
\sum_{i = 1}^{n_{\rm tr}} \phi(z_i)$, with $\phi(z_i) = {\rm vec}\big( x_i (y_i, 1) \trans \big)$, 
where, for any matrix $A = [a_1, a_2] \in \Real^{d \times 2}$ with columns $a_1, a_2 
\in \Real^d$, ${\rm vec}(A) = (a_1, a_2) \trans \in \Real^{2d}$. 

\begin{figure}[t]
\begin{center}
\begin{minipage}[t]{\columnwidth}  
\includegraphics[width=.48\columnwidth]{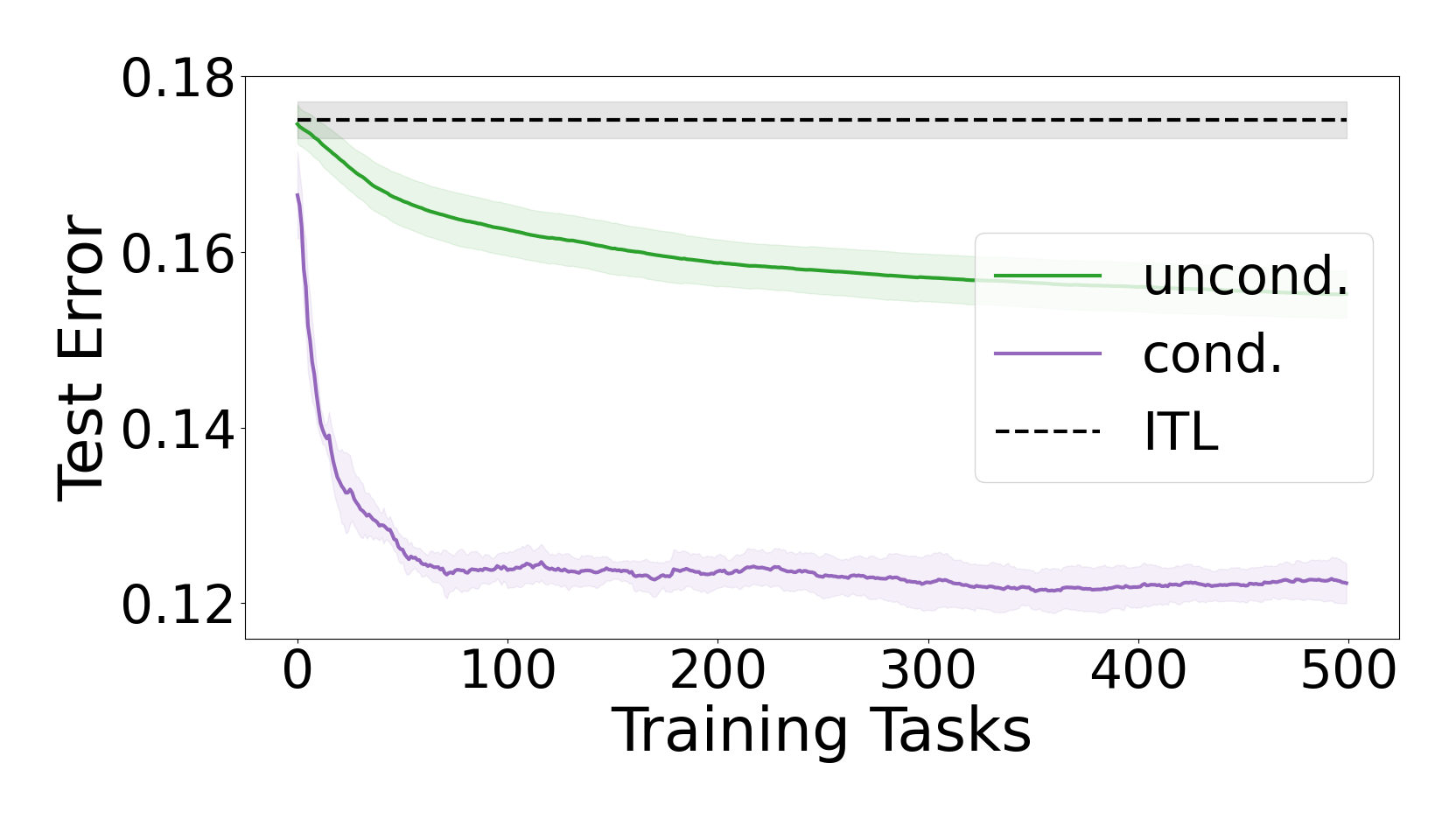} 
\quad
\includegraphics[width=.48\columnwidth]{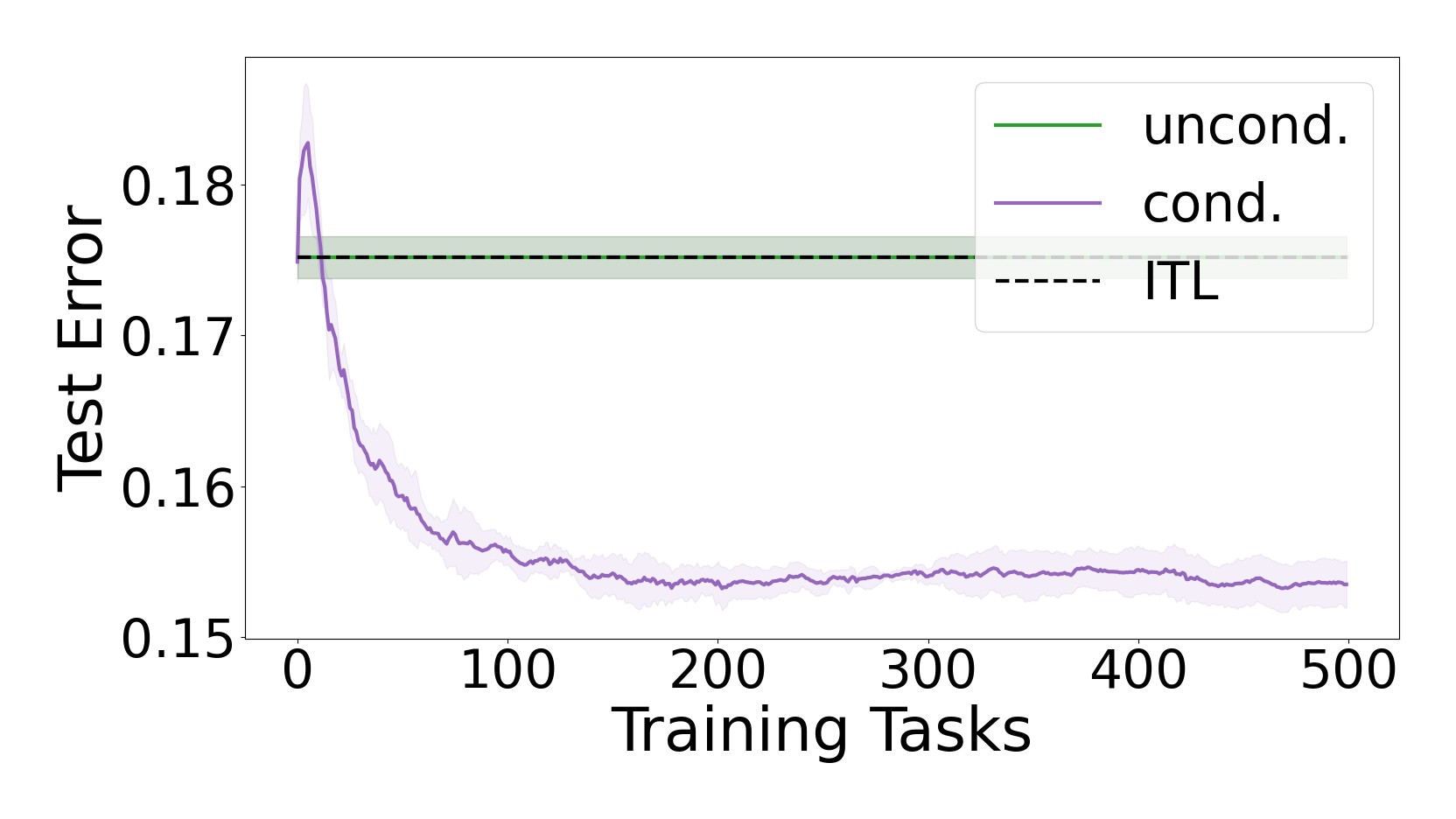}
\end{minipage}
\caption{Test error (averaged over $5$ random generations of the data) 
of different methods w.r.t. an increasing number of tasks
on synthetic data. $2$ clusters (Left) and $6$ clusters (Right).
\label{synth_plots}}
\end{center}
\vskip -0.2in
\end{figure}

In \cref{synth_plots}, we report the results we got 
on an environment of tasks generated as above with $m = 2$ (Left) and $m = 6$ 
(Right) clusters, respectively.  As we can see, when the clusters are two, the unconditional 
approach outperforms ITL (as predicted from previous literature), but the unconditional
method is in turn outperformed by our conditional counterpart. When the number of clusters raises to
six, the performance of unconditional meta-learning degrades to the same performance 
of ITL, while conditional meta-learning outperforms both methods.  
Summarizing, the more the heterogeneity of the environment (number of clusters) 
is significant, the more the conditional approach brings advantage w.r.t. the unconditional 
one. This is in line with our statement in \cref{clusters_ex}. 

\begin{figure}[t]
\begin{center}
\begin{minipage}[t]{\columnwidth}  
\includegraphics[width=.48\columnwidth]{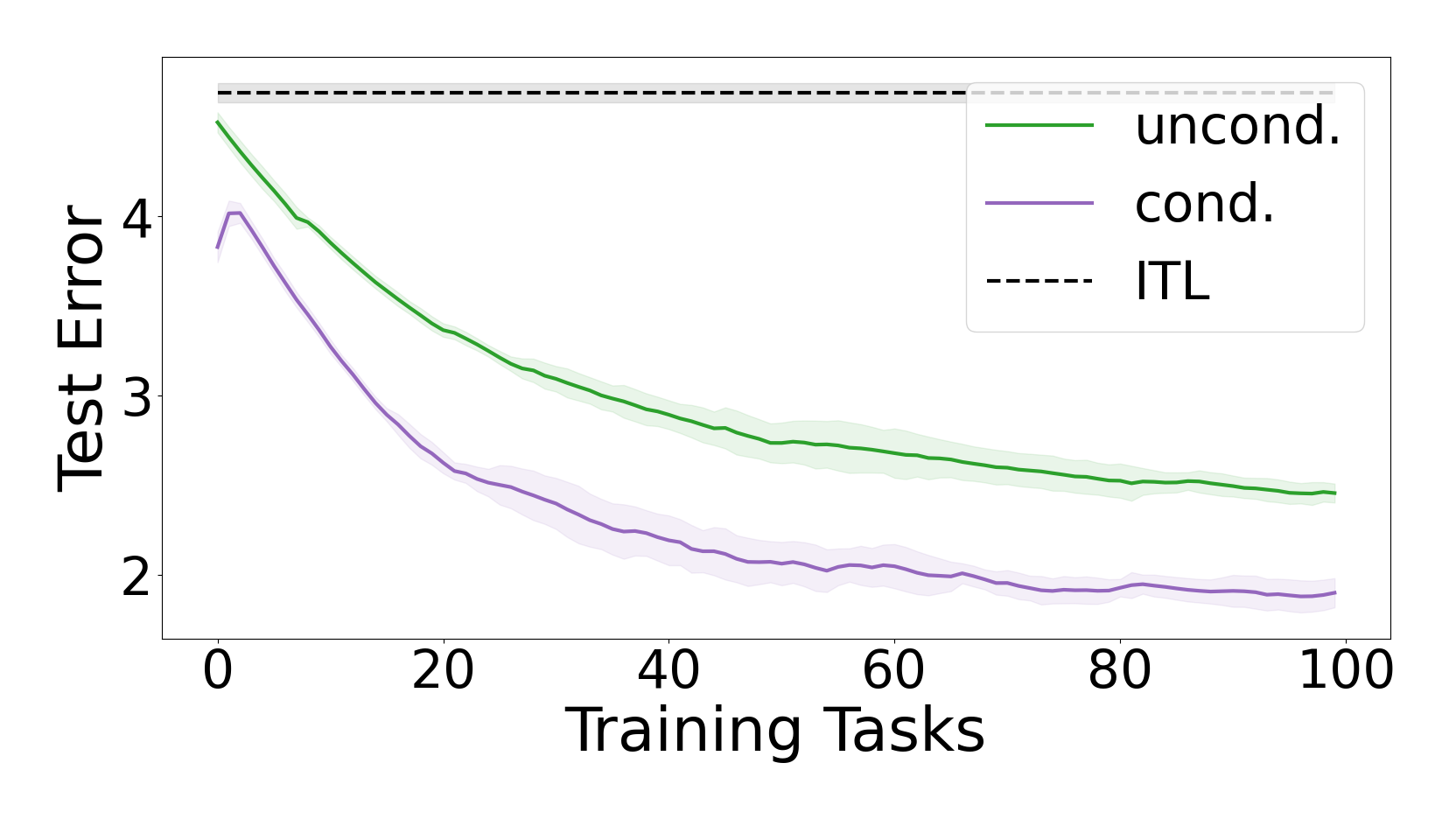} 
\quad
\includegraphics[width=.48\columnwidth]{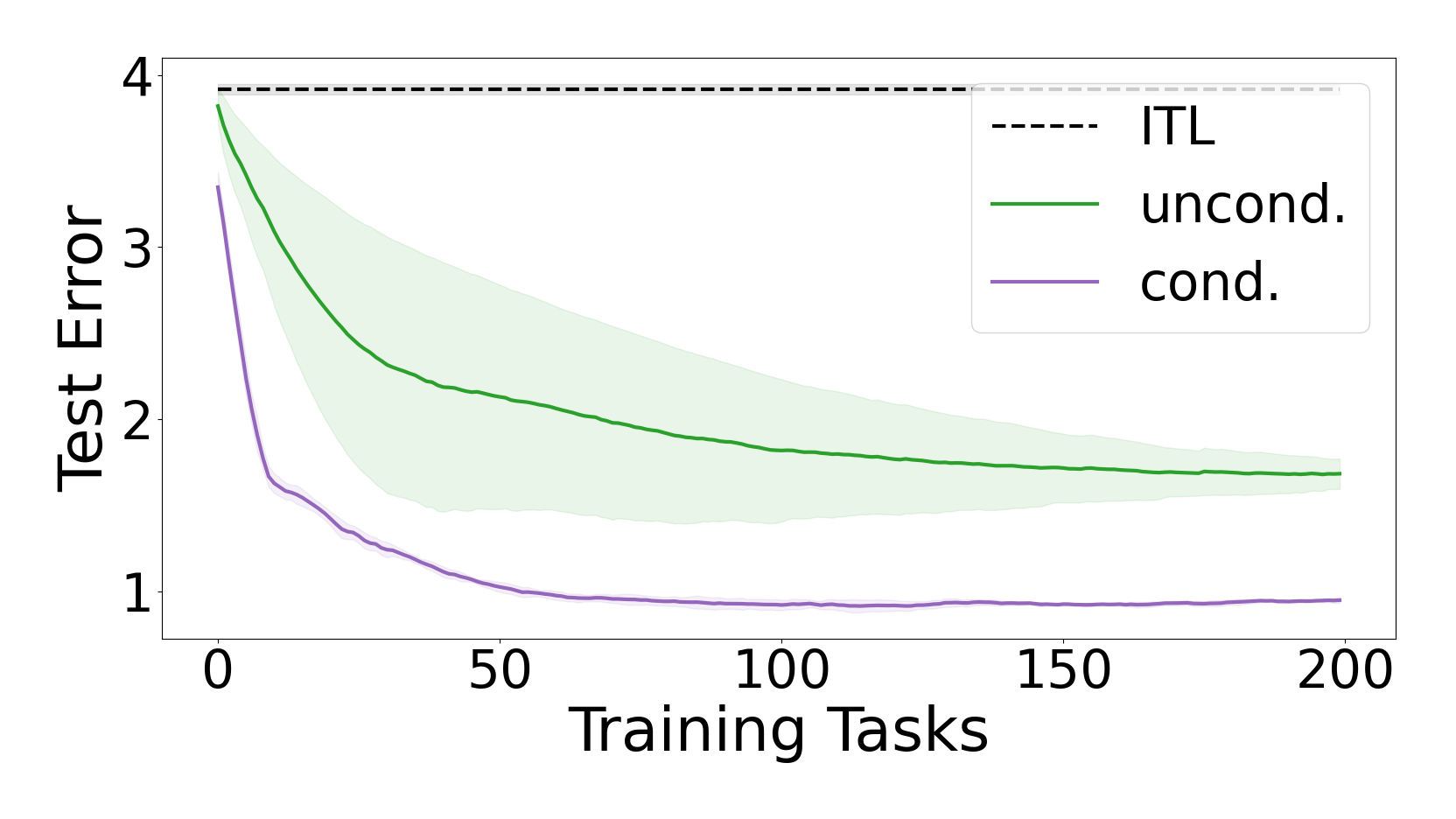}
\end{minipage}
\caption{Test error (averaged over $5$ random splitting of the data) 
of different methods w.r.t. an increasing number of tasks
on the Lenk dataset (Left) and the Movielens-100k dataset (Right).
\label{lenk_movies_plots}}
\end{center}
\vskip -0.2in
\end{figure}

\begin{figure}[t]
\begin{center}
\centerline{\includegraphics[width=.48\columnwidth]{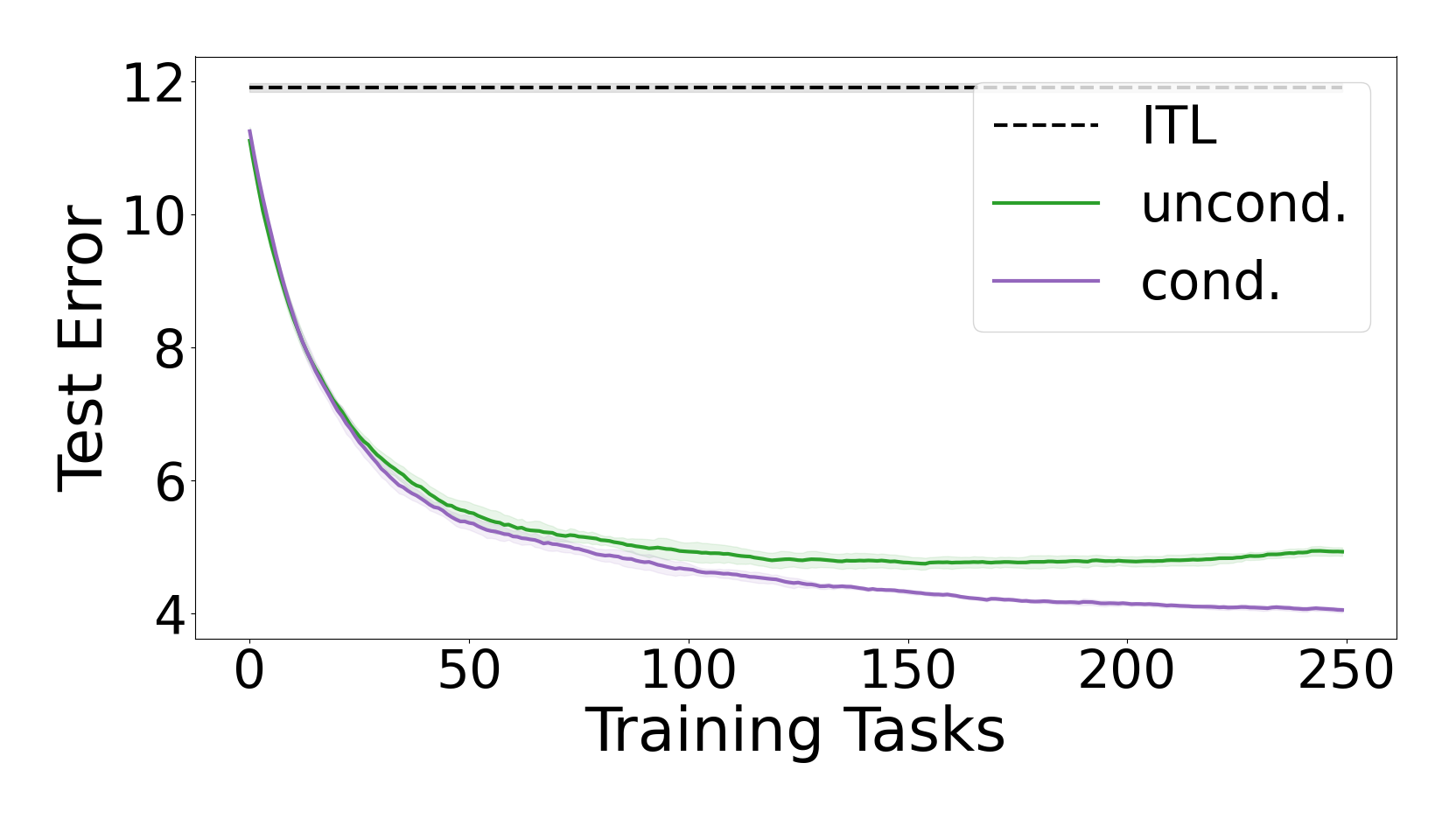}}
\caption{Test error (averaged over $5$ random splitting of the data) 
of different methods w.r.t. an increasing number of tasks
on the Jester-1 dataset.}
\label{jester_plot}
\end{center}
\vskip -0.2in
\end{figure}

\paragraph{Real Datasets} We tested the performance of the methods also on the 
regression problem on the computer survey data from \cite{lenk1996hierarchical} \citep[see also][]{Andrew}.
$T_{\rm tot} = 180$ people (tasks) rated the likelihood of purchasing one 
of $n_{\rm tot} = 20$ computers. The input represents $d = 13$ computers' 
characteristics and the label is a rate in $\{0, \dots,10\}$. In this case, we used as side 
information the training datapoints $\Zn =(z_i)_{i = 1}^{n_{\rm tr}}$ and the feature 
map $\Phi:\D \to \Real^{d+1}$ defined by $\Phi(\Zn) = w_\Zn$, with $w_\Zn$ the 
solution of Tikhonov regularization with the squared loss, namely, the vector satisfying 
$(\hat X \trans \hat X + I_{d+1}) w_\Zn = \hat X \trans y$, where, $\hat X \in \Real^{(d+1) 
\times n}$ is the matrix obtained by adding to the matrix $X \in \Real^{n \times d}$ one 
column of ones at the end. \cref{lenk_movies_plots} (Left) shows that also in this case, the unconditional 
approach outperforms ITL, but the performance of its conditional counterpart is much better. 

Finally, we tested the performance of the methods on the Movielens-100k and Jester-1 real-world datasets, containing ratings of users (tasks) to movies and jokes (points), respectively. We recall that recommendation system settings with $d$ items can be interpreted within the meta-learning setting by considering each data point $(x,y)$ to have input $x\in\R^d$ to be the one-hot encoding of the current item to be rated (e.g. a movie or a joke) and $y\in\R$ the corresponding score \citep[see e.g.][for more details]{denevi2019learning}. We restricted the original dataset to the $n_{\rm tot} = 20$ most voted movies/jokes (as a consequence, by formulation, $d = 20$). We guaranteed each user voted at least $5$ movies/jokes, which led to a total of $T_{\rm tot} = 400/450$ tasks (i.e. users). In both cases, we used as side information the training datapoints $\Zn =(z_i)_{i = 1}^{n_{\rm tr}}$. For the Movielens-100k dataset 
we used the same feature map described for the synthetic clusters experiments in \cref{synth_plots}.
For the Jester-1 dataset, let $M$ and $m$ denote the maximum and minimum rating value that can be assigned to a joke. We adopted the feature map $\Phi:\D \to \Real^{2d+1}$ such that, for any dataset $\Zn=(x_i,y_i)_{i = 1}^n$, we have
\begin{equation}
    \Phi(\Zn) = \left(\begin{array}{c}\textrm{vec}(\tilde\Phi(\Zn))\\
    1\end{array}\right),
\end{equation}
where $\textrm{vec}$ denotes the vectorization operator (i.e. mapping a matrix in the vector concatenating all its columns) and $\widetilde\Phi:\Zn\to\R^{d\times 2}$ is such that
\begin{equation}
\widetilde\Phi(\Zn) = \left[\cos\left(\sum_{i=1}^n x_i \left(
\frac{\pi}{4}\frac{M - y_i}{M - m}\right)\right),~ \sin\left(\sum_{i=1}^n x_i \left(
\frac{\pi}{4}\frac{M - y_i}{M - m}\right)\right) \right] \odot \left(\sum_{i=1}^n x_i\right),
\end{equation}
with $\odot$ denoting the Hadamard (entry-wise) product broad-casted across both columns.

The rationale behind this feature map is to represent as similar vectors those users with similar scores for the same movies. In particular, each item-score pair observed in training is represented as a unitary vector in $\R_{++}^2$, with the angle depending on the score attributed to that item (the vector corresponds to zero if that movie was not observed at the training time). We noticed that this feature map did not provide significant advantages on the Movielens-100k dataset, while being particularly favorable on the Jester-1 benchmark.

We report the average test errors (and standard deviation) for ITL, conditional and unconditional meta-learning in \cref{lenk_movies_plots} (Right) and \cref{jester_plot} for Movielens-100k and Jester-1, respectively.
As it can be noticed, the proposed approach performs significantly better than ITL and its unconditional counterpart. This suggests that groups of users might rely each on similar features (but different from those of other groups) to rate an item in the dataset (respectively a movie or a joke). 



\section{Conclusion}
\label{Conclusion}

We proposed a conditional meta-learning approach aiming 
at learning a function mapping task's side information into a 
linear representation that is well suited for the task at hand.
We theoretically and experimentally showed that the proposed
conditional approach is advantageous w.r.t. the standard 
unconditional counterpart when the observed tasks share 
heterogeneous linear representations. Our investigation allowed
us to develop also a new variant of an unconditional meta-learning
method requiring tuning one less hyper-parameter and relying on faster learning bounds than state-of-the-art unconditional
approaches.

We identify two main directions for future work. A first question left opened by most conditional meta-learning methods is how to design a suitable feature map $\Phi$ when the tasks' training datas is used as side information. Following most previous work \cite{rusu2018meta,wang2020structured} in our experiments we adopted a mean embedding representation. However, given the key importance played by such feature map in \cref{bound_estimated_feature}, it will be worth investigating better alternatives in the future. A second direction is more focused on computations and modeling aspects. In particular it will be valuable to investigate how to predict non-linear 
conditioning functions (similarly to e.g. \cite{bertinetto2018meta,pmlr-v70-finn17a}) and develop more efficient versions of our method, 
using less expensive algorithms to update the positive matrices, 
such as the Frank-Wolfe algorithm used in \cite{hazan19} to deal with unconditional settings.


\bibliography{references}
\bibliographystyle{apalike}


\newpage 
\onecolumn

\section*{Appendix}

\appendix

%
%
%

The supplementary material is organized as follows. 
In \cref{Generalization Bound of the Within-Task Algorithm} 
we give the bound on the generalization error of the algorithm 
in \cref{RERM_feature} that we used in various proofs. 
In \cref{proof_oracle_function} we report the proof to get 
the closed form of the best conditioning function $\tau_\env$ 
outlined in \cref{oracle_function}. In \cref{example_proof} we 
report the proof of the statement in \cref{clusters_ex}. In 
\cref{proofs_Conditional Representation Meta-Learning Algorithm_sec}, 
we report the proofs of the statements we used in 
\cref{Conditional Representation Meta-Learning Algorithm} in order 
to prove the expected excess risk bound in \cref{bound_estimated_feature} 
for \cref{meta_alg}. Finally, in \cref{experimental_details} 
we report the experimental details we missed in the main body.



\section{Generalization Bound of the Within-Task Algorithm}
\label{Generalization Bound of the Within-Task Algorithm}

We now study the generalization error of the within-task algorithm in \cref{RERM_feature}, 
i.e. the discrepancy between the (true) risk and the empirical risk of the corresponding estimator. 
This is done in the following result where we exploit stability arguments, more precisely the so-called hypothesis stability, see \citep[Def. $3$]{bousquet2002stability}.

\begin{restatable}[Generalization Error of the Within-Task Algorithm in \cref{RERM_feature}]{proposition}{generalizationErrorRERM} 
\label{generalization_error_RERM}
Let \cref{ass_1} hold. For a distribution $\task \sim \env$, fix a dataset 
$\Zn = (x_i,y_i)_{i = 1}^n \sim \task^n$.
For any $\theta \in \Theta$, let $w_\theta(\Zn)$ be the corresponding 
RERM in \cref{RERM_feature} over $\Zn$. Then, the following generalization 
error bound holds for $w_\theta(\Zn)$:
\begin{equation}
\Exp_{\Zn \sim \task^n} ~ \big[ \cR_\task ( w_\theta(\Zn)) - \cR_{\Zn} 
( w_\theta(\Zn)) \big] \le 
\frac{2 L^2}{n} ~ \tr \big( \Exp_{z \sim \task} ~ \theta x x \trans \big).
\end{equation}
\end{restatable}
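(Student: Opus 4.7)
The plan is to carry out the standard replace-one-sample stability argument, tailored to the feature-map regularizer $\tfrac12\scal{w}{\theta^\dagger w}$. First, I would symmetrize: let $Z_n=(z_1,\dots,z_n)\sim\task^n$ and let $z_1',\dots,z_n'$ be an independent i.i.d. copy, with $Z_n^{(i)}$ the dataset obtained by replacing $z_i$ with $z_i'$. Writing $w(Z)=A(\theta,Z)$, the usual renaming trick gives
\begin{equation*}
\Exp_{Z_n}\bigl[\cR_\task(w(Z_n)) - \cR_{Z_n}(w(Z_n))\bigr]
= \frac{1}{n}\sum_{i=1}^n \Exp\bigl[\ell(\scal{x_i}{w(Z_n^{(i)})},y_i) - \ell(\scal{x_i}{w(Z_n)},y_i)\bigr],
\end{equation*}
since $z_i$ is independent of $Z_n^{(i)}$ and has the same law as $z_i'$. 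From \cref{ass_1}, Lipschitzness of $\ell$ in its first argument then bounds the bracket by $L\,|\scal{x_i}{w(Z_n^{(i)})-w(Z_n)}|$.

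Second, I would exploit strong convexity of $\cR_{Z_n,\theta}$ on $\ran(\theta)$ w.r.t.~the norm $\|u\|_{\theta^\dagger}^2=\scal{u}{\theta^\dagger u}$ (with modulus $1$, supplied by the regularizer while the empirical term is only convex). Writing $w=w(Z_n)$, $w'=w(Z_n^{(i)})$ and adding the two first-order optimality inequalities for $w$ on $\cR_{Z_n,\theta}$ and for $w'$ on $\cR_{Z_n^{(i)},\theta}$, the regularizer and the $n-1$ shared loss terms cancel, leaving
\begin{equation*}
\|w'-w\|_{\theta^\dagger}^2 \le \tfrac{1}{n}\bigl(\ell(\scal{x_i}{w'},y_i)-\ell(\scal{x_i}{w},y_i)+\ell(\scal{x_i'}{w},y_i')-\ell(\scal{x_i'}{w'},y_i')\bigr),
\end{equation*}
which by Lipschitzness is at most $\tfrac{L}{n}(|\scal{x_i}{w'-w}|+|\scal{x_i'}{w'-w}|)$.

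Third, since $w'-w\in\ran(\theta)$, I would observe the Cauchy--Schwarz identity
$|\scal{x}{u}|\le \sqrt{\scal{x}{\theta x}}\,\|u\|_{\theta^\dagger}$ for $u\in\ran(\theta)$, obtained from writing $u=\theta^{1/2}(\theta^\dagger)^{1/2}u$. Plugging this in gives the stability bound
\begin{equation*}
\|w'-w\|_{\theta^\dagger} \le \tfrac{L}{n}\bigl(\sqrt{\scal{x_i}{\theta x_i}}+\sqrt{\scal{x_i'}{\theta x_i'}}\bigr),
\end{equation*}
and hence $|\scal{x_i}{w'-w}| \le \tfrac{L}{n}\sqrt{\scal{x_i}{\theta x_i}}\bigl(\sqrt{\scal{x_i}{\theta x_i}}+\sqrt{\scal{x_i'}{\theta x_i'}}\bigr)$.

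Finally, I would take expectations and use independence of $x_i,x_i'$ together with Jensen's inequality, $\Exp\sqrt{\scal{x}{\theta x}}\cdot\Exp\sqrt{\scal{x'}{\theta x'}}\le \Exp\scal{x}{\theta x}$, to fold both terms into $\Exp_{x\sim\eta_\task}\scal{x}{\theta x}=\tr(\theta\,\Exp xx\trans)$. Collecting the $L^2$ factor and the $1/n$ yields exactly the stated $2L^2/n\cdot\tr(\theta\,\Exp_{z\sim\task} xx\trans)$, after averaging over $i$. I expect the main delicate point to be handling the pseudoinverse $\theta^\dagger$ for singular $\theta$: one must consistently restrict to $\ran(\theta)$ (so that both $w(Z_n),w(Z_n^{(i)})$ and their difference lie there and $\theta^{1/2}(\theta^\dagger)^{1/2}$ acts as the identity) in order for the strong-convexity and Cauchy--Schwarz steps above to be valid.
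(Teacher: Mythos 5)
Your proposal is correct and follows essentially the same route as the paper's proof: a replace-one-sample hypothesis-stability argument using $1$-strong convexity of $\cR_{\Zn,\theta}$ w.r.t.\ $\|\cdot\|_{\theta^\dagger}$ on $\ran(\theta)$, the dual-norm Cauchy--Schwarz bound $|\scal{x}{u}|\le\sqrt{\scal{x}{\theta x}}\,\|u\|_{\theta^\dagger}$, and an expectation step. The only cosmetic differences are that you write out the leave-one-out renaming identity directly where the paper invokes \citep[Lemma 7]{bousquet2002stability}, and you dispose of the cross term by bounding the subgradient by $L$ first and applying independence plus Jensen, where the paper keeps the subgradients and uses H\"older together with the i.i.d.\ assumption.
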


\begin{proof}
During this proof, we need to make explicit the dependency of the RERM 
(Regularized Empirical Risk Minimizer) 
$w_\theta$ in \cref{RERM_feature} w.r.t. the dataset $\Zn$. For any 
$i \in \{ 1, \dots, n \}$, consider the dataset $\Zn^{(i)}$, a copy of the original 
dataset $\Zn$ in which we exchange the point $z_i = (x_i, y_i)$ with a new
i.i.d. point $z_i' = (x_i', y_i')$. For a fixed $\theta \in \Theta$, we analyze how 
much this perturbation affects the outputs of the RERM algorithm in \cref{RERM_feature}. 
In other words, we study the discrepancy between $w_\theta(\Zn)$ and 
$w_\theta(\Zn^{(i)})$. We start from observing that, since by 
\cref{ass_1} $\cR_{\Zn,\theta}$ is $1$-strongly convex w.r.t. $\| \cdot \|_\theta
= \sqrt{\big \langle \cdot, \theta^\dagger \cdot \big \rangle}$, 
by growth condition and the definition of the RERM algorithm, we can write the following
\begin{equation}
\begin{split}
& \frac{1}{2} ~ \big \| w_\theta(\Zn^{(i)}) - w_\theta(\Zn) \big \|_\theta^2  \le 
\cR_{\Zn,\theta} ( w_\theta(\Zn^{(i)}) )  - \cR_{\Zn,\theta} ( w_\theta(\Zn) ) \\
& \frac{1}{2} ~ \big \| w_\theta(\Zn^{(i)}) - w_\theta(\Zn) \big \|_\theta^2  \le 
\cR_{\Zn^{(i)},\theta} ( w_\theta(\Zn) )  - \cR_{\Zn^{(i)},\theta} ( w_\theta(\Zn^{(i)})).
\end{split}
\end{equation}
Hence, summing the two inequalities above, we get
\begin{equation} \label{o}
\begin{split}
\big \| w_\theta(\Zn^{(i)}) - w_\theta(\Zn) \big \|_\theta^2 & \le \cR_{\Zn,\theta} ( w_\theta(\Zn^{(i)}) )  
- \cR_{\Zn^{(i)},\theta} ( w_\theta(\Zn^{(i)}) ) + \cR_{\Zn^{(i)},\theta} ( w_\theta(\Zn) ) - 
\cR_{\Zn,\theta} ( w_\theta(\Zn) ) \\
& = \frac{{\text A} + {\text B}}{n},
\end{split}
\end{equation}
where we have introduced the terms
\begin{equation}
\begin{split}
& \text{A} = \ell (  \langle x_i', w_\theta(\Zn)  \rangle, y_i' ) - \ell (  \langle x_i', w_\theta(\Zn^{(i)})  \rangle, y_i' ) \\
& \text{B} = \ell (  \langle x_i, w_\theta(\Zn^{(i)})  \rangle, y_i ) - \ell (  \langle x_i, w_\theta(\Zn)  \rangle, y_i ).
\end{split}
\end{equation}
Now, introducing the subgradients $s_{\theta,i}' \in \partial \ell ( \cdot , y_i' ) (  \langle x_i', w_\theta(\Zn)  \rangle )$ and $s_{\theta,i} \in \partial \ell ( \cdot , y_i ) (  \langle x_i, w_\theta(\Zn^{(i)})  \rangle)$ and
applying Holder's inequality, we can write
\begin{equation} \label{oo}
\begin{split}
& \text A \le \big \langle x_i' s_{\theta,i}', w_\theta(\Zn) - w_\theta(\Zn^{(i)}) \big \rangle
\le \big \| x_i' s_{\theta,i}' \big \|_{\theta,*} ~ \big \| w_\theta(\Zn^{(i)}) - w_\theta(\Zn) \big \|_\theta \\
& \text B \le \big \langle x_i s_{\theta,i}, w_\theta(\Zn^{(i)}) - w_\theta(\Zn) \big \rangle
\le \big \| x_i s_{\theta,i} \big \|_{\theta,*} ~ \big \| w_\theta(\Zn^{(i)}) - w_\theta(\Zn) \big \|_\theta,
\end{split}
\end{equation}
where $\| \cdot \|_{\theta,*} = \sqrt{\big \langle \cdot, \theta \cdot \big \rangle}$ is the 
dual norm of $\| \cdot \|_\theta$. Combining these last two inequalities with \cref{o} 
and simplifying, we get the following
\begin{equation} \label{oooo}
\big \| w_\theta(\Zn^{(i)}) - w_\theta(\Zn) \big \|_\theta \le \frac{1}{n} 
\Bigl( \big \| x_i' s_{\theta,i}' \big \|_{\theta,*} + \big \| x_i s_{\theta,i} \big \|_{\theta,*} \Bigr).
\end{equation}
Hence, combining the first row in \cref{oo} with \cref{oooo}, we can write
\begin{equation}
\begin{split}
\ell (  \langle x_i', w_\theta(\Zn)  \rangle, y_i' ) - \ell ( \langle x_i', 
w_\theta(\Zn^{(i)}) \rangle, y_i' )
\le \frac{1}{n} \Bigl( \big \| x_i' s_{\theta,i}' \big \|_{\theta,*}^2 + \big \| x_i' s_{\theta,i}' \big \|_{\theta,*} ~
\big \| x_i s_{\theta,i} \big \|_{\theta,*} \Bigr).
\end{split}
\end{equation}
Now, taking the expectation w.r.t. $\Zn \sim \task^n$ and $z_i' \sim \task$ of the left side member 
above, according to \citep[Lemma $7$]{bousquet2002stability}, we get 
\begin{equation*}
\Exp_{\Zn \sim \task^n} ~ \Exp_{z_i' \sim \task} ~ \Big[ \ell (  \langle x_i', w_\theta(\Zn)  \rangle, y_i' ) 
- \ell (  \langle x_i', w_\theta(\Zn^{(i)})  \rangle, y_i' ) \Big] =
\Exp_{\Zn \sim \task^n} ~ \Big[ \cR_\task ( w_\theta(\Zn) ) - \cR_{\Zn} ( w_\theta(\Zn) ) \Big].
\end{equation*}
Finally, taking the expectation of the right side member, exploiting the fact that the points 
are i.i.d. according $\task$, we get
\begin{equation}
\Exp_{\Zn \sim \task^n} ~ \Exp_{z_i' \sim \task} ~
\frac{1}{n} \Bigg ( \big \| x_i' s_{\theta,i}' \big \|_{\theta,*}^2 + \big \| x_i' s_{\theta,i}' \big \|_{\theta,*} 
\big \| x_i s_{\theta,i} \big \|_{\theta,*} \Bigg ) \le \frac{2}{n} ~ \Exp_{\Zn \sim \task^n} ~ \Exp_{z_i' \sim \task} 
~ \big \| x_i' s_{\theta,i}' \big \|_{\theta,*}^2,
\end{equation}
where we recall that $s_{\theta,i}' \in \partial \ell ( \cdot , y_i' ) (  \langle x_i', w_\theta(\Zn)  \rangle )$.
Combining the two last statements above, we get
\begin{equation}
\Exp_{\Zn \sim \task^n} ~ \big[ \cR_\task ( w_\theta(\Zn)) - \cR_{\Zn} 
( w_\theta(\Zn)) \big] \le \frac{2}{n} ~ \Exp_{\Zn \sim \task^n} ~ 
\Exp_{z_i' \sim \task} ~ \big \| x_i' s_{\theta,i}' \big \|_{\theta,*}^2.
\end{equation}
Finally, substituting the close form of $\| \cdot \|_{\theta,*}$ and 
observing that, by \cref{ass_1} we have $\big \| x_i' s_{\theta,i}' 
\big \|_{\theta,*}^2 \le L^2 \big \| x_i' \big \|_{\theta,*}^2$,
we get the desired statement:
\begin{equation}
\Exp_{\Zn \sim \task^n} ~ \big[ \cR_\task ( w_\theta(\Zn)) - \cR_{\Zn} 
( w_\theta(\Zn)) \big] \le \frac{2 L^2}{n} ~ \Exp_{z_i' \sim \task} ~ 
\big \langle x_i', \theta x_i' \big \rangle 
= \frac{2 L^2}{n} ~ \tr \big( \Exp_{z \sim \task} ~ \theta x x \trans \big).
\end{equation}
\end{proof}


\section{Proof of \cref{oracle_function}}
\label{proof_oracle_function}

In this section we report the proof to get the closed form 
of the best conditioning function $\tau_\env$ outlined in 
\cref{oracle_function}. In order to do this, we need 
the following results.

\begin{lemma} \label{ortoghonal_w_task}
For any $\task \sim \mt$, define the inputs' covariance matrix $C_\task = 
\Exp_{x \sim \eta_\task} x x \trans$. Then, for any $w_\task \in \argmin_{w \in \Real^d} 
\cR_\task(w)$, the projection $w_{0,\task} = C_\task^\dagger C_\task w_\task$ of 
$w_\task$ onto the range of $C_\task$ is still a minimizer of $\cR_\task$.
\end{lemma}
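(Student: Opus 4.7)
The plan is to show that the risk $\cR_\task$ depends on $w$ only through its orthogonal projection onto $\ran(C_\task)$, so in particular replacing $w_\task$ by $w_{0,\task}=C_\task^\dagger C_\task w_\task$ cannot change the value of $\cR_\task$.

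First I would record the linear-algebraic fact that for a symmetric positive semidefinite matrix $C_\task$ the product $P=C_\task^\dagger C_\task$ is the orthogonal projector onto $\ran(C_\task)=\ker(C_\task)^\perp$, so $w_{0,\task}=Pw_\task$ is indeed the claimed projection and $(I-P)$ is the projector onto $\ker(C_\task)$.

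The main observation is that the inputs of $\task$ are supported on $\ran(C_\task)$ almost surely: for any $v\in\ker(C_\task)$,
\begin{equation*}
\Exp_{x\sim\eta_\task} \scal{x}{v}^2 \;=\; v\trans\, \Exp_{x\sim\eta_\task} xx\trans\, v \;=\; v\trans C_\task v \;=\; 0,
\end{equation*}
so $\scal{x}{v}=0$ $\eta_\task$-almost surely, and by choosing $v$ in a basis of $\ker(C_\task)$ this gives $(I-P)x=0$ $\eta_\task$-a.s. Consequently, for every $w\in\R^d$,
\begin{equation*}
\scal{x}{w} \;=\; \scal{Px}{w} \;=\; \scal{x}{Pw} \qquad \eta_\task\text{-a.s.}
\end{equation*}

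Applying this identity inside the definition of $\cR_\task$ with $w=w_\task$ yields
\begin{equation*}
\cR_\task(w_\task) \;=\; \Exp_{(x,y)\sim\task}\,\ell\bigl(\scal{x}{w_\task},y\bigr) \;=\; \Exp_{(x,y)\sim\task}\,\ell\bigl(\scal{x}{Pw_\task},y\bigr) \;=\; \cR_\task(w_{0,\task}),
\end{equation*}
so $w_{0,\task}$ attains the same risk as the minimizer $w_\task$ and is therefore itself a minimizer of $\cR_\task$, proving the claim. I do not anticipate a real obstacle: everything hinges on the elementary fact that the null space of $C_\task$ is $\eta_\task$-negligible directionally, the rest is just bookkeeping.
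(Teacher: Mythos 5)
Your proposal is correct and follows essentially the same route as the paper's proof: both reduce to the observation that $v\trans C_\task v=0$ for $v\in\ker(C_\task)$ forces $\scal{x}{v}=0$ $\eta_\task$-almost surely, so the risk is unchanged when $w_\task$ is replaced by its projection onto $\ran(C_\task)$. The only cosmetic difference is that you show $x\in\ran(C_\task)$ a.s.\ via a basis of the kernel, whereas the paper applies the same argument directly to the single component $w^\perp=(I-P)w_\task$.
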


\begin{proof}
Consider the decomposition of $w_\task$ w.r.t. the range of $C_\task$:
\begin{equation}
w_\task = w_{0,\task} + w^{\perp}
\end{equation}
with $w_{0,\task} = C_\task^\dagger C_\task w_\task$ and $w^\perp \in \R^d$ 
such that $C_\task w^\perp = 0$. We note that, almost surely w.r.t. the points $x \in \R^d$ 
sampled from $\task$, we have $\scal{w^\perp}{x} = 0$. This follows by noting that 
by the orthogonality between $C_\task$ and $w^\perp$, we have
\begin{equation}
0 = \scal{w^\perp}{C_\mu w^\perp} = \Exp_{x\sim\eta_\task} \scal{w^\perp}{xx\trans w^\perp} = \Exp_{x\sim\eta_\task} \scal{x}{w^\perp}^2,
\end{equation}
that can hold only if $\scal{x}{w^\perp}^2=0$ almost surely (a.s.) w.r.t. $\eta_\task$. 
We conclude that $\scal{w_\task}{x} = \scal{w_{0,\task}}{x} + \scal{w^\perp}{x} = 
\scal{w_{0,\task}}{x}$ a.s. w.r.t. $\task$ and, consequently, $\cR_\task(w_\task) = 
\cR_\task(w_{0,\task})$.
\end{proof}

\begin{corollary}\label{cor:conditional-covariance-ranges}
For any $s\in\Ss$, recall the conditional covariance matrices in \cref{bound_fixed_feature}.
Then, $\ran(W(s))\subset\ran(C(s))$, namely the range of the task-vector conditional 
covariance $W(s)$ is always contained in the range of the input conditional covariance $C(s)$. 
\end{corollary}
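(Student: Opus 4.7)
The plan is to combine \cref{ortoghonal_w_task} with a standard fact about ranges of averaged positive semidefinite (PSD) matrices, and apply the latter twice. First, by \cref{ortoghonal_w_task}, we may choose each $w_\task$ to lie in $\ran(C_\task)$, where $C_\task = \Exp_{x \sim \eta_\task} x x \trans$ is the inputs' covariance of the single task $\task$. In particular the rank-one PSD matrix $w_\task w_\task\trans$ has range contained in $\ran(C_\task)$.

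The key elementary lemma I would invoke (or prove in one line) is: if $(A_\task)_{\task}$ is a family of PSD matrices indexed by a probability space, then $\ran(A_\task) \subseteq \ran(\Exp_\task A_\task)$ almost surely. This follows from the kernel characterisation of PSD matrices: for any $v \in \ker(\Exp_\task A_\task)$ one has $0 = v\trans (\Exp_\task A_\task) v = \Exp_\task (v\trans A_\task v)$, and since $v\trans A_\task v \ge 0$, this forces $v\trans A_\task v = 0$ almost surely, i.e.\ $A_\task v = 0$ almost surely; equivalently $\ker(\Exp_\task A_\task) \subseteq \ker(A_\task)$ a.s., which dualises to the claimed range inclusion.

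Applying this fact with the family $(C_\task)$ under the conditional measure $\env(\cdot|s)$ gives $\ran(C_\task) \subseteq \ran(C(s))$ for $\env(\cdot|s)$-almost every $\task$. Combined with the first step, this means $w_\task \in \ran(C(s))$ almost surely, so $w_\task w_\task\trans$ has range in $\ran(C(s))$ almost surely. Applying the same averaging lemma a second time to the family $(w_\task w_\task\trans)$ then yields $\ran(W(s)) \subseteq \ran(\Exp_{\task \sim \env(\cdot|s)} w_\task w_\task\trans) = \ran(W(s))$ contained in $\ran(C(s))$, which is exactly the claim.

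I do not expect any real obstacle here: the argument is essentially a kernel-chasing computation plus \cref{ortoghonal_w_task}. The only minor subtlety is the measure-theoretic bookkeeping of the ``almost surely'' qualifiers, but since only the final deterministic inclusion between $W(s)$ and $C(s)$ is asserted, and both intermediate inclusions are stable under the expectation that defines $W(s)$, the argument closes cleanly.
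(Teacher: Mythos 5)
Your proposal is correct and takes essentially the same route as the paper: \cref{ortoghonal_w_task} combined with the fact that taking conditional expectations of positive semidefinite matrices preserves range inclusions, which the paper states tersely and your kernel-chasing lemma makes explicit. The only nitpick is that your final step is not literally a second application of that averaging lemma (which gives $\ran(w_\task w_\task\trans)\subseteq\ran(W(s))$, the wrong direction for this step); what you need, and what the same kernel computation gives at once, is that $w_\task\in\ran(C(s))$ almost surely forces $v\trans W(s)v=\Exp_{\task\sim\env(\cdot|s)}\scal{w_\task}{v}^2=0$ for every $v\perp\ran(C(s))$, hence $\ran(W(s))\subseteq\ran(C(s))$.
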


\begin{proof}
The corollary is a direct consequence of the previous \cref{ortoghonal_w_task}.
The result above guarantees that for any $\task \sim \mt$, the rank-one operator 
$W_\task = w_\task w_\task \trans$ has range contained in the range of $C_\task$. 
Taking the conditional expectations $W(s) = \Exp_{\task \sim \env(\cdot|s)} W_\task$ 
and $C(s) = \Exp_{\task \sim \env(\cdot|s)} C_\task$ maintains this relation unaltered,
giving the desired statement.
\end{proof}

\begin{lemma}\label{lemma:orthogonal-projections-and-pseudoinverse}
Let $P\in\psd$ be an orthogonal projector, namely such that $P = P^2$. Then, for any positive definite matrix $\theta\in\mathbb{S}_{++}^d$, we have $P\theta^{-1}P \succeq (P\theta P)^\dagger$.
\end{lemma}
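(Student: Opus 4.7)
The plan is to reduce the stated operator inequality to a simpler block-matrix comparison by decomposing $\R^d = V \oplus V^\perp$ with $V = \ran(P)$. Since $P = P\trans = P^2$, in an orthonormal basis adapted to this splitting the projector takes the form $P = \left(\begin{smallmatrix} I_k & 0 \\ 0 & 0 \end{smallmatrix}\right)$ with $k = \dim V$, and $\theta$ has the block decomposition
\begin{equation*}
\theta = \begin{pmatrix} A_{11} & A_{12} \\ A_{12}\trans & A_{22} \end{pmatrix},
\end{equation*}
with $A_{11} \in \mathbb{S}_{++}^k$ and $A_{22} \in \mathbb{S}_{++}^{d-k}$ positive definite, inherited from $\theta \succ 0$.

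Next I would compute both sides of the inequality in this basis. Conjugation by $P$ zeroes out everything except the upper-left block, so $P\theta P$ is $A_{11}$ extended by zero and $(P\theta P)^\dagger$ is $A_{11}^{-1}$ extended by zero. Using the standard block-inverse formula, the upper-left block of $\theta^{-1}$ is the inverse of the Schur complement $S = A_{11} - A_{12} A_{22}^{-1} A_{12}\trans$, so $P\theta^{-1} P$ is $S^{-1}$ extended by zero. The desired inequality therefore collapses to the $k\times k$ statement $S^{-1} \succeq A_{11}^{-1}$.

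To close the argument, observe that $A_{12} A_{22}^{-1} A_{12}\trans \succeq 0$ (since $A_{22} \succ 0$), so $S \preceq A_{11}$; taking inverses flips the ordering by operator anti-monotonicity of $X \mapsto X^{-1}$ on $\mathbb{S}_{++}^k$, yielding $S^{-1} \succeq A_{11}^{-1}$. The one subtle point in the plan is ensuring $S$ is positive definite so that the final inversion step is legal, but this is a standard consequence of $\theta \succ 0$ via the Schur complement characterization of positive definiteness. Apart from that, the proof is pure bookkeeping with block-matrix identities, so I do not anticipate any real obstacle.
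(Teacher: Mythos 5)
Your proof is correct, and its mathematical core --- the Schur complement of the block of $\theta$ on $\ran(P)$ together with anti-monotonicity of matrix inversion --- is the same engine driving the paper's argument. The difference is in execution: the paper stays coordinate-free, writing $\theta = A + B + B\trans + C$ with $A = P\theta P$, $B = P\theta(I-P)$, $C=(I-P)\theta(I-P)$, and must then establish pseudoinverse identities such as $CC^\dagger B\trans = B\trans$ and the factorization $\theta = (I+BC^\dagger)(A - BC^\dagger B\trans + C)(I+C^\dagger B\trans)$ before reading off $P\theta^{-1}P = (A - BC^\dagger B\trans)^\dagger \succeq A^\dagger$. Your choice to pass to an orthonormal basis adapted to $\ran(P)\oplus\ran(P)^\perp$ buys a real simplification: $P\theta P$ and $P\theta^{-1}P$ become genuine $k\times k$ blocks $A_{11}$ and $S^{-1}$ (with $S = A_{11}-A_{12}A_{22}^{-1}A_{12}\trans$ the Schur complement), the standard block-inverse formula replaces the paper's hand-built factorization, and the final step $S \preceq A_{11} \Rightarrow S^{-1}\succeq A_{11}^{-1}$ uses honest inverses of positive definite matrices, sidestepping the delicate point in the paper where $X\preceq Y \Rightarrow X^\dagger \succeq Y^\dagger$ is invoked for pseudoinverses (legitimate there only because the matrices involved share the same range --- a fact your basis-adapted formulation makes automatic). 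Your noted obligation, $S\succ 0$ from $\theta\succ 0$, is indeed the standard Schur-complement criterion, so there is no gap.
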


\begin{proof}
The proof is essentially a corollary of Schur's complement. Let consider the decomposition 
\begin{equation}
    \theta = \underbrace{P\theta P}_{A} + \underbrace{P \theta (I-P)}_{B} + \underbrace{(I-P)\theta P}_{B \trans} + \underbrace{(I-P)\theta(I-P)}_{C}
\end{equation}
where $A,C\in\psd$, $B\in\R^{d\times d}$ and $CB = B\trans C = 0$ since $(I-P)P = P(I-P) = P - P^2 = P-P=0$. Additionally, since $C^\dagger = CC^\dagger C^\dagger = C^\dagger C^\dagger C$, we have that also $AC^\dagger = ACC^\dagger C^\dagger = 0$ and analogously $C^\dagger B = B\trans C^\dagger = 0$. Note that since $\theta$ is invertible, both $A$ and $C$ are full rank.
We now observe a few relevant interactions between the objects above. In particular, we observe that $CC^\dagger B\trans = B\trans$. To see this, first note that 
\begin{equation}
    CC^\dagger B\trans = (I-P)\theta(I-P)\big((I-P)\theta(I-P)\big)^\dagger (I-P)\theta P.
\end{equation}
By taking $D = (I-P)\theta^{1/2}$ and using the properties of the pseudoinverse (e.g. $D = D\trans (DD\trans )^\dagger$), we have
\begin{align}
    CC^\dagger B\trans  & = DD\trans (DD\trans )^\dagger D\theta^{1/2}P\\
    & = DD^\dagger D\theta^{1/2}P\\
    & = D\theta^{1/2}P\\
    & = B\trans .
\end{align}
We now derive an alternative characterization of $\theta$ in terms of $A,B,C$. By adding and removing a term $BC^\dagger B$ to $\theta$, we have
\begin{align}
    \theta & = A + B + B\trans  + C\\
            & = A - BC^\dagger B\trans  + BC^\dagger B\trans  + B + B\trans  + C\\
            & = A - BC^\dagger B\trans  + B + C + (B + C)(C^\dagger B\trans )\\
            & = A - BC^\dagger B\trans  + B + C + (A - BC^\dagger B\trans  + B + C)(C^\dagger B)\\
            & = (A - BC^\dagger B\trans  + B + C)(I + C^\dagger B\trans ),
\end{align}
where we have first used the equality $CC^\dagger B\trans  = B\trans $ and then the ortogonality $AC^\dagger = B\trans C^\dagger = 0$. Following a similar reasoning 
\begin{align}
A - BC^\dagger B\trans  + B + C & = A - BC^\dagger B\trans  + C + BC^\dagger C\\
        & = A - BC^\dagger B\trans  + C + BC^\dagger (A - BC^\dagger B\trans  + C)\\
        & = (I + BC^\dagger)(A - BC^\dagger B\trans  + C)
\end{align}
since $BC^\dagger C = C$ (following the same reasoning used for $B\trans  = CC^\dagger B\trans $) and $AC^\dagger = C^\dagger B = 0$. We conclude that 
\begin{equation}\label{eq:second-characterization-theta}
    \theta = (I + BC^\dagger)(A - BC^\dagger B\trans  + C)(I + C^\dagger B\trans ). 
\end{equation}
We now show that all terms in the equation above are invertible. First note that $(I+BC^\dagger)^{-1} = (I-BC^\dagger)$ and $(I+C^\dagger B\trans )^{-1}=(I+C^\dagger B\trans )$. Moreover, since $\theta\succ0$ and $C(A-BC^\dagger B\trans )=0$, then also $A-BC^\dagger B\trans \succ0$. We have
\begin{equation}
    \theta^{-1} = (I - C^\dagger B\trans )(A - BC^\dagger B\trans  + C)^{-1}(I - B C^\dagger),
\end{equation}
from which we conclude
\begin{align}
    P\theta^{-1}P & = P(A-BC^\dagger B\trans  + C)^{-1}P\\
    & = P\Big((A-BC^\dagger B\trans )^{\dagger} + C^{\dagger}\Big)P\\
    & = P(A-BC^\dagger B\trans )^{\dagger}P\\
    & = (A-BC^\dagger B\trans )^{\dagger}.
\end{align}
Since $BC^\dagger B\trans \succeq 0$, we have $A-BC^\dagger B\trans  \preceq A$ and therefore $(A-BC^\dagger B\trans )^\dagger \succeq A^\dagger$ from which we have
\begin{equation}
    P\theta^{-1}P = (A-BC^\dagger B\trans )^\dagger \succeq A^\dagger = (P\theta P)^\dagger,
\end{equation}
as desired. 
\end{proof}

\begin{proposition} \label{prop:generalization-variational-trace-norm}
Consider two matrices $A,B\in \psd$ 
such that $\ran(A) \subseteq \ran(B)$ and consider the 
following associated problem:
\begin{equation}\label{eq:original-surrogate-problem}
\min_{\theta \in\psd,~\ran(A) \subseteq \ran(\theta)}
~ \tr(\theta^{-1} A) + \tr(\theta B).
\end{equation} 
Then, a minimizer and the corresponding minimum 
of the problem above are given by 
\begin{equation}
\theta_* = B^{-1/2}(B^{1/2} A B^{1/2})^{1/2} B^{-1/2}
\quad \quad 
2 \big \| B^{1/2}A^{1/2} \big \|_*.
\end{equation}
Moreover $\theta_*$ is the unique minimizer such that $\ran(\theta_*)\subset\ran(B)$.
\end{proposition}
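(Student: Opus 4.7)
The plan is to exploit a change of variables to reduce the problem to a canonical form solvable by the matrix arithmetic--geometric mean inequality, using \cref{lemma:orthogonal-projections-and-pseudoinverse} to handle the possibly singular $B$. First, I would restrict the feasible set to matrices with $\ran(\theta) \subseteq \ran(B)$. Letting $P$ be the orthogonal projector onto $\ran(B)$, the hypothesis $\ran(A) \subseteq \ran(B)$ yields $PAP = A$ and $PBP = B$. Thus for any admissible $\theta$ one has $\tr(\theta B) = \tr(P\theta P\,B)$, while \cref{lemma:orthogonal-projections-and-pseudoinverse} (applied after a standard $\theta + \epsilon I$ limiting argument to reduce to the strictly positive case) yields $\tr(\theta^{\dagger}A) \geq \tr((P\theta P)^{\dagger}A)$, since $A = PAP$ commutes through $P$. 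Hence replacing $\theta$ by $P\theta P$ never increases the objective, so without loss of generality we optimize over $\theta \in \psd$ with $\ran(A) \subseteq \ran(\theta) \subseteq \ran(B)$.

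Second, within $\ran(B)$ the operator $B^{1/2}$ is bijective, so I would perform the change of variables $\eta = B^{1/2}\theta B^{1/2}$. With $M = B^{1/2} A B^{1/2}$, pseudoinverse algebra under the constraint $\ran(\theta) \subseteq \ran(B)$ gives $\tr(\theta B) = \tr(\eta)$ and $\tr(\theta^{\dagger}A) = \tr(\eta^{\dagger} M)$, and the constraint $\ran(A) \subseteq \ran(\theta)$ translates to $\ran(M) \subseteq \ran(\eta) \subseteq \ran(B)$. The problem thus reduces to
\begin{equation*}
\min_{\eta \in \psd,~\ran(M) \subseteq \ran(\eta) \subseteq \ran(B)} \; \tr(\eta) + \tr(\eta^{\dagger} M).
\end{equation*}

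Third, $\tr(\eta)$ penalizes any eigenvalue of $\eta$ supported outside $\ran(M)$, while $\tr(\eta^{\dagger}M)$ depends only on the restriction of $\eta$ to $\ran(M)$; hence the optimum is attained with $\ran(\eta) = \ran(M)$. Restricted to $\ran(M)$, the functional is strictly convex and its gradient $I - \eta^{-1} M \eta^{-1}$ vanishes uniquely at $\eta_* = M^{1/2}$, yielding minimum value $2 \tr(M^{1/2}) = 2 \tr\bigl((B^{1/2} A B^{1/2})^{1/2}\bigr) = 2\|B^{1/2} A^{1/2}\|_*$. Inverting the change of variables gives $\theta_* = B^{\dagger/2}(B^{1/2} A B^{1/2})^{1/2} B^{\dagger/2}$, with $\ran(\theta_*) = \ran(A) \subseteq \ran(B)$. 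Uniqueness among minimizers with $\ran(\theta) \subseteq \ran(B)$ follows from the bijectivity of the change of variables together with the strict convexity just noted.

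The main obstacle is making the reduction in the first two steps fully rigorous in the face of singular $B$. Specifically, one must carefully verify that \cref{lemma:orthogonal-projections-and-pseudoinverse} extends to semidefinite $\theta$ satisfying $\ran(A) \subseteq \ran(\theta)$ (so that the pseudoinverses behave consistently under projection by $P$), and that $\eta \mapsto B^{\dagger/2}\eta B^{\dagger/2}$ is indeed a bijection between the admissible sets which preserves both trace expressions. Once this bookkeeping is in place, the AM--GM-style minimization of $\tr(\eta) + \tr(\eta^{\dagger}M)$ is routine and yields all three claims (minimizer, minimum value, and uniqueness) simultaneously.
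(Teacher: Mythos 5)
Your proposal is correct and follows essentially the same route as the paper: reduce to $\ran(\theta)\subseteq\ran(B)$ via \cref{lemma:orthogonal-projections-and-pseudoinverse} and the projector $P=BB^\dagger$, change variables through $B^{1/2}$, and minimize $\tr(\eta)+\tr(\eta^\dagger M)$ with $M=B^{1/2}AB^{1/2}$. The only minor difference is that you solve the reduced problem directly by first-order optimality and strict convexity, whereas the paper invokes the known variational characterization of the trace norm and handles attainment and uniqueness via limiting-sequence arguments.
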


\begin{proof}
Let $\Theta = \{\theta\in\psd ~|~ \ran(A)\subset\ran(\theta)\}$ and denote by $F:\Theta\to\R$ the objective functional of the problem in \cref{eq:original-surrogate-problem}, such that for any $\theta\in\Theta$
\begin{equation}
    F(\theta) = \tr(\theta^{-1}A) + \tr(\theta B).
\end{equation}
Note that the sign of inverse is well defined since $\ran(A)\subset\ran(\theta)$. 
We begin the proof by showing that the \cref{eq:original-surrogate-problem} is equivalent to 
\begin{equation}\label{eq:equivalent-surrogate-problem-range-B}
\min_{\theta\in\psd,~\ran(A)\subset\ran(\theta) \subset \ran(B)}
~ \tr(\theta^{-1} A) + \tr(\theta B).
\end{equation} 
To see this, let $P= BB^\dagger$ the orthogonal projector onto the range of $B$. By hypothesis, $A = PAP$ and $B = PBP$. Therefore, for any $\theta\in\mathbb{S}_{++}^d$
\begin{align*}
    F(\theta) & = \tr(\theta^{-1}A) + \tr(\theta B) \\
     & = \tr(P\theta^{-1}PA) + \tr(P\theta PB)\\
     & \geq \tr((P\theta P)^\dagger A) + \tr(P\theta PB)\\
     & = F(P\theta P),
\end{align*}
where we have applied the fact that $P\theta^{-1}P\succeq (P\theta P)^\dagger$ from \cref{lemma:orthogonal-projections-and-pseudoinverse} and the positive semidefinteness of $A$. 
The inequality above implies the equivalence between \cref{eq:original-surrogate-problem} and \cref{eq:equivalent-surrogate-problem-range-B}. Indeed, let $\theta_*\in\Theta$ be a minimizer of \cref{eq:original-surrogate-problem} and consider a sequence $(\theta_n)_{n\in\N}$ such that $\theta_n\in\mathbb{S}_{++}^d$ for any $n\in\N$ and $\theta_n\to\theta_*$. By continuity of $F$ we have also that $F(\theta_n)\to F(\theta_*)$. Clearly, $F(\theta_*)\leq F(P\theta_n P)\leq F(\theta_n)$ and therefore also $F(P\theta_n P)\to F(\theta_*)$. By continuity of $F$ over $\Theta$, this also implies that the limit $\lim_{n\to+\infty}P\theta_n P = P\theta_*P$ is a minimizer for \cref{eq:original-surrogate-problem} (and one such that $\ran(\theta_*)\subset\ran(B)$). 
We consider now the set $\Theta_B=\{\theta\in\psd~|~\ran(\theta) =\ran(B)\}$ of all positive semidefinite matrices with same range as $B$, hence invertible on $\ran(B)$. Note that $\Theta_B$ is an open subset of $\Theta$ and its closure in $\Theta$ corresponds to $\Theta$ itself. By definition, any $\theta\in\Theta_B$ is such that $\theta = B^{\dagger/2}X B^{\dagger/2}$ with $\ran(X)=\ran(B)$. This implies in particular that $XB^{\dagger}B = X$ and $\theta^\dagger = B^{\dagger/2}X^\dagger B^{\dagger/2}$. Therefore,
\begin{align}
    F(\theta) & = \tr(\theta^\dagger A) + \tr(\theta B) \\
    & = \tr(X^\dagger B^{1/2}AB^{1/2}) + \tr(X),
\end{align}
and $\ran(B^{1/2}AB^{1/2})\subseteq \ran(B) = \ran(X)$. We can now minimize the problem w.r.t. $X$, namely
\begin{equation}\label{eq:equivalent-surrogate-problem-change-of-variables}
\min_{X\in\psd,~\ran(B^{1/2}AB^{1/2})\subseteq\ran(X)}
~ \tr(X^\dagger B^{1/2}AB^{1/2}) + \tr(X).
\end{equation} 
The minimization corresponds to the variational form of the trace norm of $B^{1/2}AB^{1/2}$ \cite{micchelli2013regularizers} and has solution $X_* = (B^{1/2}AB^{1/2})^{1/2}$, with minimum corresponding to $2\tr((B^{1/2}AB^{1/2})^{1/2}) = 2\nor{B^{1/2}A^{1/2}}_*$. 
To conclude the proof, let $G:\{X\in\psd ~|~ \ran(B^{1/2}AB^{1/2}) \subseteq \ran(X) \} \to\R$ be the objective functional in \cref{eq:equivalent-surrogate-problem-change-of-variables} such that $G(X) = \tr(X^\dagger B^{1/2}AB^{1/2}) + \tr(X)$. Let now $X_*\in\psd$ be a minimzer for $G$ and $(X_n)_{n\in\N}$ be a minimizing sequence with $\ran(X_n)=\ran(B)$ for each $n\in\N$ and $X_n\to X_*$. Let $(\theta_n)_{n\in\N}$ such that $\theta_n = B^{\dagger/2}XB^{\dagger/2}$ for any $n\in\N$. Then we have $\theta_n\to B^{\dagger/2}X_* B^{\dagger/2}$ and by continuity $F(B^{\dagger/2}X_* B^{\dagger/2}) = G(X_*)$, hence $\min_XG(X)\leq\min_\theta F(\theta)$. 
Note that $B^{\dagger/2}X_* B^{\dagger/2}$ is a minimizer for $F$, since $F$ and $G$ have same minimum value. To see this it is sufficient to show that, given a minimizing sequence $(\theta_n)_{n\in\N}$ such that $\ran(\theta_n) = \ran(B)$ for any $n\in\N$ and $\theta_n\to\theta_*$, we have $X_n = B^{1/2}\theta_n B^{1/2}\to B^{1/2}\theta_n B^{1/2}$ and thus $F(\theta_*) = G(B^{1/2}\theta_n B^{1/2})$. We have shown that $\min_{\theta} F(\theta)\geq\min_{X}G(X)$. Therefore $\theta_* = B^{\dagger/2}X_*B^{\dagger/2} = B^{\dagger/2}(B^{1/2}AB^{1/2})^{1/2}B^{\dagger/2}$ is a minimizer of \cref{eq:original-surrogate-problem} as desired. The uniqueness of $\theta_*$ follows from the uniqueness of $X_*$ from the standard results on the variational form of the trace norm \cite{micchelli2013regularizers}. 
\end{proof}

We now have all the ingredients necessary to prove 
\cref{oracle_function}.

\OracleFunction*

\begin{proof}
We aim to minimize 
\begin{equation}\label{eq:conditional-surrogate-problem}
\min_{\substack{\tau:\Ss\to\Theta\\ \ran(W(s)) \subseteq \ran(\tau(s))} }~\Exp_{s \sim \ms}~\varphi(s,\tau(s)) \qquad \textrm{with} \qquad
\quad \varphi(s,\theta) = \frac{\tr \big( \theta^\dagger W(s) \big)}{2} 
+ \frac{2 L^2 \tr \big(\theta C(s) \big)}{n}.
\end{equation}
over the set of all measurable functions $\tau:\Ss\to\Theta$. Note that from \cref{cor:conditional-covariance-ranges}, for any $s\in\Ss$ we have $\ran(W(s))\subset\ran(C(s))$. Therefore we can apply \cref{prop:generalization-variational-trace-norm} to have that for any $s\in\Ss$, the problem
\begin{equation}
\min_{\theta \in\psd,~\ran(W(s)) \subseteq \ran(\theta)}
~ \varphi(s,\theta)
\end{equation} 
has solution 
\begin{equation}
\tau_\env(s) = \frac{\sqrt{n}}{2L}~C(s)^{\dagger/2} (C(s)^{1/2} W(s) C^{1/2})^{1/2}C(s)^{\dagger/2}.
\end{equation} 
Therefore, for any $\tau:\Ss\to\Theta$ we have
\begin{equation}
    \Exp_{s\sim\ms} ~\varphi(\tau_\env(s),s) \leq \Exp_{s\sim\ms} ~\varphi(\tau(s),s),
\end{equation}
and therefore $\Exp_{s\sim\ms} ~\varphi(\tau_\env(s),s)\leq \min_{\tau} \Exp_{s\sim\ms}~ \varphi(\tau)$. To conclude the proof we need to show that $\tau_\env$ is measurable. This follows immediately by applying Aumann’s measurable selection principle, see for instance the formulation in \citep[Lemma A.3.18]{steinwart2008support}. Under the notation of \cite{steinwart2008support}, we can apply the result by taking $h(s,\theta) = (\theta\theta^\dagger - I)W(s)$, the set $A = \{0\}\subset Y = \psd$. This guarantees the existence of a measurable function $\tau_0:\Ss\to\Theta$ such that it minimizes pointwise $\varphi(s,\cdot)$ for any $s\in\Ss$ on the set $\{\theta\in\psd ~|~ \ran(W(s))\subset\ran(\theta)\}$. The uniqueness of $\tau_\env(s)$ for each $s\in\Ss$ guarantees that $\tau_\rho = \tau_0$ is measurable as desired. 
\end{proof}


\section{Proof of \cref{clusters_ex}}
\label{example_proof}

In this section we report the proof of the statement in \cref{clusters_ex}. 

\ClustersExample*

\begin{proof}
According to the setting described in the example, we can 
rewrite the following:
\begin{equation} \label{pp}
\begin{split}
\Exp_{s \sim \env_\Ss} \big \| C(s)^{1/2} W(s)^{1/2} \big \|_*
& = \Exp_{s \sim \env_\Ss} \big \| C^{1/2} W(s)^{1/2} \big \|_* \\
& = \Exp_{s \sim \env_\Ss} \tr\Big( \big( C^{1/2} W(s) C^{1/2} \big)^{1/2} \Big) \\
& = \frac{1}{m} \sum_{i = 1}^m \Exp_{s \sim \env_\Ss^{(i)}} 
\tr\Big( \big( C^{1/2} W(s) C^{1/2} \big)^{1/2} \Big) \\
& = \frac{1}{m} \sum_{i = 1}^m \tr\Big( \big( C^{1/2} W(a_i) C^{1/2} \big)^{1/2} \Big) \\
& = \frac{1}{m} \tr\Bigg( \sum_{i = 1}^m \big( C^{1/2} W(a_i) C^{1/2} \big)^{1/2} \Bigg) \\
& = \frac{1}{m} \tr\Bigg( \Big( \sum_{i = 1}^m C^{1/2} W(a_i) C^{1/2} \Big)^{1/2} \Bigg),
\end{split}
\end{equation}
where, in the first equality we have exploited the fact that $C(s)$ is a 
constant matrix C, in the second equality we have applied the definition
of the rewriting of the trace norm of a matrix $A$ as $\| A \|_* = \tr \big( 
(A A \trans)^{1/2} \big)$, in the third and fourth equality we have exploited 
the assumption on $\ms$, and finally, in the last equality, by point $2)$,
we managed to apply the fact that, for two matrices $A,B \in \psd$ such 
that $A^{1/2} B^{1/2} = B^{1/2}A^{1/2} = 0$, we have 
\begin{equation}
(A^{1/2} + B^{1/2}) (A^{1/2} + B^{1/2}) = A + B 
\implies 
(A + B)^{1/2} = A^{1/2} + B^{1/2}.
\end{equation} 
On the other hand, we observe that we can also write the following:
\begin{equation} \label{qq}
\begin{split}
\big \| C_\env^{1/2} W_\env^{1/2} \big \|_*
& = \big \| C^{1/2} W_\env^{1/2} \big \|_* \\
& = \tr\Big( \big( C^{1/2} W_\env C^{1/2} \big)^{1/2} \Big) \\
& = \tr\Big( \big( C^{1/2} \Exp_{s \sim \ms} W(s) C^{1/2} \big)^{1/2} \Big) \\
& = \tr\Bigg( \Big( C^{1/2} \frac{1}{m} \sum_{i = 1}^m \Exp_{s \sim \env_\Ss^{(i)}} 
W(s) C^{1/2} \Big)^{1/2} \Bigg) \\
& = \frac{1}{\sqrt{m}} \tr\Bigg( \Big( \sum_{i = 1}^m C^{1/2}
W(a_i) C^{1/2} \Big)^{1/2} \Bigg) \\
& = \frac{1}{\sqrt{m}} \tr\Bigg( \Big( C^{1/2} \sum_{i = 1}^m 
W(a_i) C^{1/2} \Big)^{1/2} \Bigg),
\end{split}
\end{equation}
where, in the first equality we have exploited the fact that $C(s)$ is a 
constant matrix C, in the second equality we have applied the definition
of the rewriting of the trace norm of a matrix $A$ as $\| A \|_* = \tr \big( 
(A A \trans)^{1/2} \big)$ and in the fourth and fifth equality we have 
exploited the assumption on $\ms$. The desired statement directly 
derives from combining \cref{pp} and \cref{qq}.
\end{proof}


\section{Proofs of the statements in \cref{Conditional Representation Meta-Learning Algorithm}}
\label{proofs_Conditional Representation Meta-Learning Algorithm_sec}

In this section we report the proofs of the statements we used
in \cref{Conditional Representation Meta-Learning Algorithm} in 
order to prove the expected excess risk bound for \cref{meta_alg} 
in \cref{bound_estimated_feature}. We start from proving the matricial
rewriting of \cref{matricial_rewriting} in \cref{proof_matricial_rewriting}.
We then prove in \cref{properties_surrogate_proof} the properties of 
the surrogate functions in \cref{properties_surrogate}. Then, in 
\cref{proof_conv_rate_surr}, we prove the convergence rate of \cref{meta_alg} 
on the surrogate problem in \cref{surrogate_linear}.


\subsection{Proof of \cref{matricial_rewriting}}
\label{proof_matricial_rewriting}

We start from proving the matricial rewriting of \cref{matricial_rewriting}..

\MatricialRewriting*

\begin{proof}
We start from observing that for any $i, j = 1, \dots, d$, 
we can rewrite the following
\begin{equation} \label{rewriting1}
\begin{split}
\Big( \big( M \Phi(s) \big) \trans M \Phi(s) \Big)_{i,j} 
& = \big \langle \big( M \Phi(s) \big) \trans (i,:), \big( M \Phi(s) \big) (:,j) \big \rangle \\
& = \big \langle \big( M \Phi(s) \big) (:,i), \big( M \Phi(s) \big) (:,j) \big \rangle \\
& = \sum_{q = 1}^m \big( M \Phi(s) \big) (:,i)_q \big( M \Phi(s) \big) (:,j)_q \\
& = \sum_{q = 1}^m \Bigg( \sum_{h = 1}^k M_{q,i,h} \Phi(s)_h \Bigg) 
\Bigg( \sum_{z = 1}^k M_{q,j,z} \Phi(s)_z \Bigg) \\
& = \sum_{q = 1}^m \sum_{h = 1}^k \sum_{z = 1}^k
M_{q,i,h} M_{q,j,z} \Phi(s)_h \Phi(s)_z \\
& = \sum_{h = 1}^k \sum_{z = 1}^k \Phi(s)_h \Phi(s)_z 
\sum_{q = 1}^m M_{q,i,h} M_{q,j,z} \\
& = \sum_{h = 1}^k \sum_{z = 1}^k \Phi(s)_h \Phi(s)_z 
\Bigg( \sum_{q = 1}^m M_{q,i,h} M_{q,j,z} \Bigg) \\
& = \sum_{h = 1}^k \sum_{z = 1}^k \Phi(s)_h \Phi(s)_z 
\big \langle M(:,i,h), M(:,j,z) \big \rangle.
\end{split}
\end{equation}
We now observe that for any $i, j = 1, \dots, d$, 
we can rewrite the following
\begin{equation} \label{rewriting2}
\begin{split}
\Big( \big( I_d \otimes \Phi(s) \trans \big) H_M \big( I_d \otimes \Phi(s) \big) \Big)_{i,j} 
& = \big \langle \big( I_d \otimes \Phi(s) \trans \big) (i,:), \big( H_M \big( I_d \otimes \Phi(s) \big) 
\big) (:,j) \big \rangle \\
& = \big \langle \big( I_d \otimes \Phi(s) \big) (:,i), \big( H_M \big( I_d \otimes \Phi(s) \big) 
\big) (:,j) \big \rangle \\
& = \sum_{n = 1}^{kd} \big( I_d \otimes \Phi(s) \big)_{n,i} 
\big( H_M \big( I_d \otimes \Phi(s) \big) \big)_{n,j} \\
& = \sum_{n = 1}^{kd} \big( I_d \otimes \Phi(s) \big)_{n,i} 
\big \langle H_M(n,:), \big( I_d \otimes \Phi(s) \big) (:,j) \big \rangle \\
& = \sum_{n = 1}^{kd} \big( I_d \otimes \Phi(s) \big)_{n,i} 
\sum_{p = 1}^{kd} \big(H_M \big)_{n,p} \big( I_d \otimes \Phi(s) \big)_{p,j} \\
& = \sum_{n = 1}^{kd} \sum_{p = 1}^{kd} \big( I_d \otimes \Phi(s) \big)_{n,i} 
\big(H_M \big)_{n,p} \big( I_d \otimes \Phi(s) \big)_{p,j} \\
& = \sum_{n = 1}^{kd} \sum_{p = 1}^{kd} \Phi(s)_h ~ \delta_{n, (i-1)k+h} 
\big(H_M \big)_{n,p} \Phi(s)_{z} ~ \delta_{p, (j-1)k+z} \\
& = \sum_{h= 1}^{k} \sum_{z = 1}^{k} \Phi(s)_h \Phi(s)_z \big( H_M \big)_{(i-1)k+h, (j-1)k+z},
\end{split}
\end{equation}
where, in the seventh equality we have exploited the fact that, by definition,
\begin{equation}
\big( I_d \otimes \Phi(s) \big)_{n,i} =
\begin{cases}
\Phi(s)_r & \text{if $r = n - (i-1)k$} \\
0 & \text{otherwise} 
\end{cases}
= \Phi(s)_r ~ \delta_{n, r + (i-1)k}.
\end{equation}
and in the last equality we have defined the new indexes
$h, z = 1, \dots, k$ as
\begin{equation}
h = n - (i-1)k
\quad \quad \quad 
z = p - (j-1)k
\end{equation}
and, as consequence, we have rewritten 
\begin{equation}
n = (i-1)k+h
\quad \quad \quad 
p = (j-1)k+z.
\end{equation}
As, a consequence, if we define $H_M$ as the matrix in 
$\Real^{dk \times dk}$ with entries
\begin{equation}
\big( H_M \big)_{(i-1)k + h, (j-1)k +z} = \big \langle 
M(:,i,h), M(:,j,z) \big \rangle, 
\end{equation}
with $i, j = 1, \dots, d$ and $h, z = 1, \dots, k$, then,
\cref{rewriting1}:
\begin{equation}
\Big( \big( M \Phi(s) \big) \trans M \Phi(s) \Big)_{i,j} 
= \sum_{h = 1}^k \sum_{z = 1}^k \Phi(s)_h \Phi(s)_z 
\big \langle M(:,i,h), M(:,j,z) \big \rangle 
\end{equation}
and \cref{rewriting2}:
\begin{equation} 
\Big( \big( I_d \otimes \Phi(s) \trans \big) H_M \big( I_d \otimes \Phi(s) \big) \Big)_{i,j} 
= \sum_{h= 1}^{k} \sum_{z = 1}^{k} \Phi(s)_h \Phi(s)_z \big( H_M \big)_{(i-1)k+h, (j-1)k+z}
\end{equation}
coincide. This coincides with the first desired statement. 
In order to prove the statement $H_M \in \mathbb{S}_+^{dk}$, 
we show that $H_M = A_M \trans A_M$, where $A_M$ is the 
matrix in $\Real^{m \times dk}$ defined as
\begin{equation}
A_M (:,(i-1)k + h) = M(:,i,h).
\end{equation}
We start from recalling that, by definition of $H_M$, we have
\begin{equation}
\big( H_M \big)_{(i-1)k + h, (j-1)k +z} = \big \langle 
M(:,i,h), M(:,j,z) \big \rangle. 
\end{equation}
Moreover, we observe that, for any $p,q = 1,\dots,kd$,
\begin{equation}
\big( A_M \trans A_M \big)_{p,q} 
= \big \langle \big( A_M \trans \big) (p,:), A_M(:,q) \big \rangle_{\Real^m}
= \big \langle A_M(:,p), A_M(:,q) \big \rangle. 
\end{equation}
As a consequence, the desired statement is satisfied if we define
\begin{equation}
\big( A_M \big) (:,(i-1)k + h) = M(:,i,h).
\end{equation}
We now prove the last statement.
Let $(e_i)_{i = 1}^d$ be the canonical basis in $\Real^d$.
By the definition of the trace and the rewriting of $\tau(s)$ in 
\cref{matricial_rewriting}, denoting by ${\rm vec}$ the vectorization 
operation, we can rewrite
\begin{equation} \label{pluto}
\begin{split}
\tr \big(\tau(s)\big) 
& = \sum_{i = 1}^d \big \langle e_i, \tau(s) e_i \big \rangle \\ 
& = \sum_{i = 1}^d \big \langle e_i, \big( I_d \otimes \Phi(s) \trans \big) 
H_M \big( I_d \otimes \Phi(s) \big) e_i \big \rangle \\ 
& = \sum_{i = 1}^d e_i \trans \big( I_d \otimes \Phi(s) \trans \big) 
H_M \big( I_d \otimes \Phi(s) \big) e_i \\
& = \sum_{i = 1}^d \Big( \big( I_d \otimes \Phi(s) \big) e_i \Big) \trans 
H_M \big( I_d \otimes \Phi(s) \big) e_i \\
& = \sum_{i = 1}^d \Big( {\rm vec} \big( \Phi(s) e_i \trans \big) \Big) \trans 
H_M {\rm vec} \big( \Phi(s) e_i \trans \big) \\
& =  \tr \Big( H_M \sum_{i = 1}^d {\rm vec} \big( \Phi(s) e_i \trans \big)
{\rm vec} \big( \Phi(s) e_i \trans \big) \trans \Big) \\
& \le \tr \big( H_M \big) \Bigg \| \sum_{i = 1}^d {\rm vec} \big( \Phi(s) e_i \trans \big)
{\rm vec} \big( \Phi(s) e_i \trans \big) \trans \Bigg \|_\infty \\
& = \tr \big( H_M \big) \big \| \Phi(s) \big \|_{\Real^k}^2,
\end{split}
\end{equation}
where, in the fifth equality, we have applied the relation
\begin{equation}
\big( C \trans \otimes A \big) {\rm{vec}}(B) = 
{\rm{vec}} (ABC)
\end{equation}
with $A = \Phi(s)$, $B = e_i \trans$ and $C = I_d$, i.e.
\begin{equation}
\big( I_d \otimes \Phi(s) \big) e_i = {\rm vec} \big( \Phi(s) e_i \trans \big), 
\end{equation}
in the inequality we have applied Holder's inequality
and in the last equality we have applied the following 
proposition.
\end{proof}

\begin{proposition}
For any $i = 1,\dots,d$, define
\begin{equation}
v_i = {\rm vec} \big( \Phi(s) e_i \trans \big)
\end{equation}
Then, 
\begin{equation}
\Bigg \| \sum_{i = 1}^d {\rm vec} \big( \Phi(s) e_i \trans \big)
{\rm vec} \big( \Phi(s) e_i \trans \big) \trans \Bigg \|_\infty 
= \Bigg \| \sum_{i = 1}^d v_i v_i \trans \Bigg \|_\infty 
= \big \| \Phi(s) \big \|^2.
\end{equation}
\end{proposition}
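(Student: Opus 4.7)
The plan is to unpack the structure of each $v_i$ using the identity $v_i = (I_d \otimes \Phi(s))\, e_i$ that was already established earlier in the proof of \cref{matricial_rewriting}, and then assemble the sum of outer products into a Kronecker product that is easy to compute an operator norm of.

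First I would observe that, since $e_i e_i\trans$ sums to the identity $I_d$ as $i$ ranges over $1,\dots,d$, we can rewrite
\begin{equation*}
\sum_{i=1}^d v_i v_i\trans
= \sum_{i=1}^d \big( I_d \otimes \Phi(s) \big) e_i e_i\trans \big( I_d \otimes \Phi(s) \big)\trans
= \big( I_d \otimes \Phi(s) \big) \big( I_d \otimes \Phi(s)\trans \big).
\end{equation*}
Then I would apply the standard mixed-product rule for Kronecker products, $(A \otimes B)(C \otimes D) = (AC) \otimes (BD)$, to obtain
\begin{equation*}
\sum_{i=1}^d v_i v_i\trans \;=\; I_d \otimes \big( \Phi(s) \Phi(s)\trans \big),
\end{equation*}
which is block-diagonal with the rank-one matrix $\Phi(s)\Phi(s)\trans \in \R^{k\times k}$ repeated $d$ times along the diagonal.

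Finally, since the spectrum of a Kronecker product is the product of the spectra of the factors, the operator norm satisfies $\|I_d \otimes M\|_\infty = \|I_d\|_\infty \cdot \|M\|_\infty = \|M\|_\infty$ for any $M \in \psd$. Taking $M = \Phi(s)\Phi(s)\trans$ gives $\|\Phi(s)\Phi(s)\trans\|_\infty = \|\Phi(s)\|^2$ (this is just the single nonzero eigenvalue of the rank-one matrix), which is the claimed equality.

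There is essentially no obstacle: the whole argument is a short manipulation of Kronecker products once one rewrites $v_i = (I_d \otimes \Phi(s)) e_i$. If one preferred a purely elementary path, the same conclusion follows by inspection: $v_i$ is supported only on the $i$-th block of coordinates of $\R^{dk}$, so $v_i v_i\trans$ has $\Phi(s)\Phi(s)\trans$ in its $(i,i)$ block and zeros elsewhere, whence the sum is block diagonal and its operator norm is that of any one block, namely $\|\Phi(s)\|^2$.
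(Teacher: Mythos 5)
Your proof is correct, but it follows a different route from the paper's. The paper argues spectrally from scratch: it computes the pairwise inner products $v_i\trans v_j = \tr\big(e_i \Phi(s)\trans \Phi(s) e_j\trans\big) = \|\Phi(s)\|^2\,\delta_{i,j}$ via the identity ${\rm vec}(A)\trans{\rm vec}(B)=\tr(A\trans B)$, concludes that the normalized vectors $v_i/\|\Phi(s)\|$ are orthonormal, and reads off $\sum_i v_i v_i\trans = \|\Phi(s)\|^2\sum_i \tilde v_i\tilde v_i\trans$ as an explicit eigendecomposition whose nonzero eigenvalues all equal $\|\Phi(s)\|^2$. You instead reuse the identity $v_i = (I_d\otimes\Phi(s))e_i$ (already exploited in the surrounding proof), collapse the sum through $\sum_i e_i e_i\trans = I_d$ and the mixed-product rule to get $I_d\otimes\big(\Phi(s)\Phi(s)\trans\big)$, and then invoke the spectral behaviour of Kronecker products (equivalently, the block-diagonal structure). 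Both arguments are elementary and short; yours has the advantage of producing the explicit block-diagonal/Kronecker form of the operator in one line, while the paper's is self-contained in the sense that it needs only the vec--trace identity and exhibits the eigenvectors directly without appealing to Kronecker spectral facts. One cosmetic remark: when $\Phi(s)=0$ the rank-one reasoning degenerates, but both sides are zero, so the stated equality still holds; neither your write-up nor the paper's bothers with this trivial case.
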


\begin{proof}
We start from observing that, for any $i,j = 1,\dots,d$, we have
\begin{equation}
\begin{split}
v_i \trans v_j 
& = {\rm vec} \big( \Phi(s) e_i \trans \big) \trans {\rm vec} \big( \Phi(s) e_j \trans \big) \\
& = \tr \big( e_i \Phi(s) \trans \Phi(s) e_j \trans \big) \\
& = \tr \big( \Phi(s) \trans \Phi(s) e_j \trans e_i \big) \\
& = \Phi(s) \trans \Phi(s) e_j \trans e_i \\
& = \big \| \Phi(s) \big \|^2 \delta_{i,j},
\end{split}
\end{equation}
where, in the second equality, we have used 
the property of the operator $\rm{vec}$:
\begin{equation}
{\rm vec} (A) \trans {\rm vec} (B) 
= \tr \big( A \trans B \big)
\end{equation}
with
\begin{equation}
A = \Phi(s) e_i \trans
\quad \quad
B = \Phi(s) e_j \trans.
\end{equation}
As a consequence, the vectors
\begin{equation}
\tilde{v_i} = \frac{v_i}{\| v_i \|} = \frac{v_i}{\big \| \Phi(s) \big \|}
\quad \quad
i = 1, \dots, d,
\end{equation}
form an orthonormal basis of the space. 
Moreover, we can rewrite the operator above as
follows
\begin{equation}
\sum_{i = 1}^d {\rm vec} \big( \Phi(s) e_i \trans \big)
{\rm vec} \big( \Phi(s) e_i \trans \big) \trans
= \sum_{i = 1}^d v_i v_i \trans
= \sum_{i = 1}^d \big \| \Phi(s) \big \|^2 \tilde{v_i} \tilde{v_i} \trans.
\end{equation}
The rewriting above coincides with the eigenvalues' decomposition
of the operator: the vectors $\tilde{v_i}$ are the eigenvectors with
associated constant eigenvalues $\big \| \Phi(s) \big \|^2$. As a 
consequence, we can conclude that 
\begin{equation}
\Bigg \| \sum_{i = 1}^d {\rm vec} \big( \Phi(s) e_i \trans \big)
{\rm vec} \big( \Phi(s) e_i \trans \big) \trans \Bigg \|_\infty 
= \big \| \Phi(s) \big \|^2.
\end{equation}
\end{proof}


\subsection{Proof of \cref{properties_surrogate}}
\label{properties_surrogate_proof}

We now prove the properties of the surrogate functions in 
\cref{properties_surrogate}.

\PropertiesSurrogate*

\begin{proof} 
We are interested in studying the properties of the surrogate function 
$\LL \big (\cdot, \cdot, s, \Zn \big ): \mathbb{S}_+^{dk} \times \psd \to \Real$
in \cref{surrogate_linear}. We start from observing that, 
such a function coincides with the composition of the function
\begin{equation} \label{banana5}
\begin{split}
& \theta \in \psd \mapsto \Delta(\theta, \Zn) = F(\theta,\Zn) + G(\theta, \Zn) \in \Real \\
& F(\theta,\Zn) = \min_{w \in \Real^d} ~ \cR_{\Zn,\theta}(w) 
\quad \quad 
\cR_{\Zn,\theta}(w) = \frac{1}{n} \sum_{i = 1}^n \ell(\langle x_i, w \rangle, y_i) 
+ \frac{\la}{2} \big \langle w, \theta^\dagger w \big \rangle + \iota_{\ran(\theta)}(w) \\
& G(\theta, \Zn) = \frac{2 L^2}{n}  \tr \Big(\theta \frac{X \trans X}{n} \Big).
\end{split}
\end{equation}
with the linear transformation 
\begin{equation}
s \in \Ss \mapsto \tau_{H,C}(s) =  
\big( I_d \otimes \Phi(s) \trans \big) H \big( I_d \otimes \Phi(s) \big) + C \in \psd.
\end{equation}
In other words, for any $H \in \mathbb{S}_+^{dk}$ and $C \in \psd$,
we can write
\begin{equation}
\LL \big (H, C, s, \Zn \big ) = \Delta(\tau_{H,C}(s), \Zn)
= F(\tau_{H,C}(s),\Zn) + G(\tau_{H,C}(s), \Zn).
\end{equation}
We now observe that both the functions $F(\cdot, \Zn)$ and $G(\cdot, \Zn)$ 
are both convex ($F(\cdot, \Zn)$ is convex since it is the minimum of a jointly 
convex function see \cite{denevi2019online} and $G(\cdot, \Zn)$ is a linear function).
As a consequence, the function $\Delta(\cdot, \Zn)$ is convex over $\psd$.
This implies the convexity of the surrogate function $\LL \big (\cdot, 
\cdot, s, \Zn \big )$ over $\mathbb{S}_+^{dk} \times \psd$ (composition of a 
convex function with a linear transformation). In order to get the closed form 
of the gradient in \cref{gradient1} we proceed in a similar way as in \cite{denevi2020advantage}.
More precisely, we start from recalling that, as already observed in \cite{denevi2019online},
thanks to strong duality in the within-task problem, for any $\theta \in \psd$, we 
can rewrite
\begin{equation}
F(\theta,\Zn) = \min_{w \in \ran(\theta)} \cR_{\Zn, \theta}(w) 
 = \max_{\alpha \in \Real^n} \Big\{ - \frac{1}{n} \sum_{i = 1}^n \ell_i^*(\alpha_i) - 
\frac{1}{2 n^2} \tr \big( \theta X \trans \alpha \alpha \trans X \big) \Big \},
\end{equation}
where, $\ell_i^*(\cdot)$ denotes the Fenchel conjugate of $\ell_i(\cdot) = \ell(\cdot,y_i)$ and 
$\alpha \in \Real^n$ coicides with the dual variable. As a consequence,
we can rewrite 
\begin{equation}
\begin{split}
\Delta(\theta, \Zn) & = F(\theta,\Zn) + G(\theta, \Zn) \\
& = \max_{\alpha \in \Real^n} \Big\{ - \frac{1}{n} \sum_{i = 1}^n \ell_i^*(\alpha_i) - 
\frac{1}{2 n^2} \tr \big( \theta X \trans \alpha \alpha \trans X \big) \Big \} 
+ \frac{2 L^2}{n}  \tr \Big(\theta \frac{X \trans X}{n} \Big) \\
& = \max_{\alpha \in \Real^n} \Bigg\{ - \frac{1}{n} \sum_{i = 1}^n \ell_i^*(\alpha_i) 
+ \tr \Bigg( \theta \Big(- \frac{X \trans \alpha \alpha \trans X}{2 n^2} 
+ \frac{2 L^2 X \trans X}{n^2} \Big) \Bigg) \Bigg \}.
\end{split}
\end{equation}
As a consequence, we have
\begin{equation}
\begin{split}
\Delta(\tau_{H,C}, \Zn) 
= \max_{\alpha \in \Real^n} \Bigg\{& - \frac{1}{n} \sum_{i = 1}^n \ell_i^*(\alpha_i) 
+ \tr \Bigg( \big( I_d \otimes \Phi(s) \trans \big) H \big( I_d \otimes \Phi(s) \big) 
\Big(- \frac{X \trans \alpha \alpha \trans X}{2 n^2} + \frac{2 L^2 X \trans X}{n^2} \Big) \Bigg) \\
& + \tr \Bigg( C \Big(- \frac{X \trans \alpha \alpha \trans X}{2 n^2} + \frac{2 L^2 X \trans X}{n^2} 
\Big) \Bigg)\Bigg \} \\
= \max_{\alpha \in \Real^n} \Bigg\{& - \frac{1}{n} \sum_{i = 1}^n \ell_i^*(\alpha_i) 
+ \tr \Bigg( H \big( I_d \otimes \Phi(s) \big) 
\Big(- \frac{X \trans \alpha \alpha \trans X}{2 n^2} + \frac{2 L^2 X \trans X}{n^2} \Big) 
\big( I_d \otimes \Phi(s) \trans \big) \Bigg) \\
& + \tr \Bigg( C \Big(- \frac{X \trans \alpha \alpha \trans X}{2 n^2} + \frac{2 L^2 X \trans X}{n^2} 
\Big) \Bigg)\Bigg \} \\
= \max_{\alpha \in \Real^n}  ~ & Q(\alpha, H, C, s, \Zn),
\end{split}
\end{equation}
where we have introduced the function 
\begin{equation}
\begin{split}
 Q(\alpha, H, C, s, \Zn) = &- \frac{1}{n} \sum_{i = 1}^n \ell_i^*(\alpha_i) 
+ \tr \Bigg( H \big( I_d \otimes \Phi(s) \big) \Big(- \frac{X \trans \alpha \alpha \trans X}{2 n^2} 
+ \frac{2 L^2 X \trans X}{n^2} \Big) \big( I_d \otimes \Phi(s) \trans \big) \Bigg) \\
& + \tr \Bigg( C \Big(- \frac{X \trans \alpha \alpha \trans X}{2 n^2} + \frac{2 L^2 X \trans X}{n^2} 
\Big) \Bigg).
\end{split}
\end{equation}
Hence, applying \citep[Lemma 44]{denevi2019online}, we know that, once computed
a maximizer $\alpha_{\tau_{H,C}(s)}$ of the function above $\alpha 
\in \Real^n \mapsto Q(\alpha,H,C,s,\Zn)$, 
\begin{equation}
\nabla Q(\alpha_{\tau_{H,C}(s)}, \cdot,\cdot, s,\Zn)(H,C) \in 
\frac{\partial \Delta(\tau_{H,C}(s),\Zn)}{\partial (H,C)}
= \frac{\partial \LL\big (H, C, s, \Zn \big)}{\partial(H,C)}.
\end{equation}
As a consequence, since for a given matrix $A$, 
$\nabla \tr \big( \cdot A)(H) = A$, we get that 
\begin{equation}
\nabla \LL\big (\cdot, \cdot, s, \Zn \big )(H, C) 
= \Big( \big( I_d \otimes \Phi(s) \big) \hat \nabla \big( I_d \otimes \Phi(s) \trans \big),
\hat \nabla \Big) \in \frac{\partial \LL\big (H, C, s, \Zn \big)}{\partial(H,C)},
\end{equation}
with
\begin{equation} \label{gigia}
\hat \nabla = - \frac{X \trans \alpha_{\tau_{H,C}(s)} \alpha_{\tau_{H,C}(s)} \trans X}{2 n^2} 
+ \frac{2 L^2 X \trans X}{n^2}.
\end{equation}
Finally, in order to get the desired closed form in \cref{gradient1}, 
we just need to observe that, according to the optimality conditions
of the within-task problem in (see \citep[Lemma 44]{denevi2019online}) 
with $\theta \in \psd$, we have that 
\begin{equation}
X \trans \alpha_\theta = - n \theta^\dagger w_\theta.
\end{equation}
As a consequence, we can rewrite \cref{gigia} as follows
by using the primal solution of the within-task problem:
\begin{equation} 
\hat \nabla = - \frac{\la}{2} \tau_{H,C}(s)^\dagger w_{\tau_{H,C}(s)}
w_{\tau_{H,C}(s)} \trans \tau_{H,C}(s)^\dagger + \frac{2 L^2 X \trans X}{n^2}.
\end{equation}
Finally, we observe that, by the closed form in \cref{gradient1},
\begin{equation} \label{decomposition_gradient}
\big \| \nabla \LL\big (\cdot, \cdot,s, \Zn \big )(H,C)  \big \|_F 
\le \big \| \nabla \LL\big (\cdot, \cdot,s, \Zn \big )(H,C)  \big \|_*
\le \text{A} + \text{B} + \text{C} + \text{D}
\end{equation}
with 
\begin{equation}
\begin{split}
\text{A} & = \Bigg \| \big( I_d \otimes \Phi(s) \big) 
\frac{X \trans \alpha_{\tau_{H,C}(s)} \alpha_{\tau_{H,C}(s)} 
\trans X}{2 n^2} \big( I_d \otimes \Phi(s) \trans \big) \Bigg \|_* \\
\text{B} & = \Bigg \| \big( I_d \otimes \Phi(s) \big) 
\frac{2 L^2 X \trans X}{n^2} \big( I_d \otimes \Phi(s) \trans \big) \Bigg \|_* \\
\text{C} & = \Bigg \| \frac{X \trans \alpha_{\tau_{H,C}(s)} \alpha_{\tau_{H,C}(s)} 
\trans X}{2 n^2} \Bigg \|_* \\
\text{D} & = \Bigg \| \frac{2 L^2 X \trans X}{n^2} \Bigg \|_*.
\end{split}
\end{equation}
We now observe that all the matrices inside the trace norms above 
are positive semidefinite (as a matter of fact, if a matrix $Q$ is 
positive semidefinite, then, $P \trans Q P$ is positive semidefinite
for any matrix $P$). As a consequence, all the trace norms above 
coincide with the trace of the corresponding matrices, namely,
\begin{equation}
\begin{split}
\text{A} & = \tr \Bigg( \big( I_d \otimes \Phi(s) \big) 
\frac{X \trans \alpha_{\tau_{H,C}(s)} \alpha_{\tau_{H,C}(s)} 
\trans X}{2 n^2} \big( I_d \otimes \Phi(s) \trans \big) \Bigg) \\
\text{B} & = \tr \Bigg( \big( I_d \otimes \Phi(s) \big) 
\frac{2 L^2 X \trans X}{n^2} \big( I_d \otimes \Phi(s) \trans \big) \Bigg) \\
\text{C} & = \tr \Bigg( \frac{X \trans \alpha_{\tau_{H,C}(s)} \alpha_{\tau_{H,C}(s)} 
\trans X}{2 n^2} \Bigg) \\
\text{D} & = \tr \Bigg( \frac{2 L^2 X \trans X}{n^2} \Bigg).
\end{split}
\end{equation}
We now observe that, proceeding as above in 
\cref{pluto} and exploiting \cref{ass_3}, we can write 
\begin{equation}
\begin{split}
\text{A} & \le \big \| \Phi(s) \big \|^2 \tr \Bigg( 
\frac{X \trans \alpha_{\tau_{H,C}(s)} \alpha_{\tau_{H,C}(s)} 
\trans X}{2 n^2}  \Bigg) 
= \big \| \Phi(s) \big \|^2 \text{C} 
\le K^2 \text{C} \\
\text{B} & \le \big \| \Phi(s) \big \|^2 \tr \Bigg( \frac{2 L^2 X \trans X}{n^2} \Bigg)
= \big \| \Phi(s) \big \|^2 \text{D}
\le K^2 \text{D}.
\end{split}
\end{equation}
Hence, combining everything in \cref{decomposition_gradient},
we get 
\begin{equation} \label{decomposition_gradient_2}
\big \| \nabla \LL\big (\cdot, \cdot, s, \Zn \big )(H,C)  \big \|_F 
\le (1 + K^2) \big( \text{C} + \text{D} \big).
\end{equation}
The desired statement derives from observing that, 
since, by \cref{ass_1}, $\tr \big( X \trans \alpha_{\tau_{H,C}(s)} 
\alpha_{\tau_{H,C}(s)} \trans X \big) \le (n L R)^2$ (see 
\citep[Lemma 44]{denevi2019online}) and $\tr \big( X \trans X \big) 
= \tr \big( X X \trans \big) 
= \sum_{i = 1}^n \| x_i \|^2 \le n R^2$, then
\begin{equation}
\text{C} \le \frac{(LR)^2}{2 \la} 
\quad \quad \quad 
\text{D} \le \frac{2 (LR)^2}{n}.
\end{equation}
\end{proof}


\subsection{Convergence rate of \cref{meta_alg} on the surrogate 
problem in \cref{surrogate_linear}}
\label{proof_conv_rate_surr}

We now give the convergence rate of \cref{meta_alg} on the surrogate
problem in \cref{surrogate_linear}.

\begin{restatable}[Convergence rate on the surrogate problem in \cref{surrogate_linear}]{proposition}{ConvergenceSurrogate} \label{convergence_surrogate}
Let $\thickbar H$ and $\thickbar C$ be the average of the iterations obtained from the application of \cref{meta_alg} over the training data $(\Zn_t, s_t)_{t = 1}^T$ with constant meta-step size $\gamma > 0$. Then, under \cref{ass_1} and \cref{ass_3}, for any $\tau_{H,C} \in \T_\Phi$, in expectation w.r.t. the sampling of $(\Zn_t, s_t)_{t = 1}^T$,
\begin{equation}
\Exp ~ \hat \ee_\env \bigl(\tau_{\thickbar H, \thickbar C} \bigr) 
- \hat \ee_\env \bigl( \tau_{H,C} \bigr) \le 
\frac{\gamma (1+ K^2)^2(LR)^4}{2 \la^2} \Big( \frac{1}{2} + \frac{2}{n}\Big)^2
+ \frac{\big\| (H - H_0, C - C_0) \big\|_F^2}{2 \gamma T}.
\end{equation}
\end{restatable}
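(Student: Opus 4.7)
The plan is to recognize \cref{meta_alg} as a standard instance of projected Stochastic Gradient Descent (SGD) on the convex surrogate meta-objective $\hat \ee_\env:\mathbb{S}_+^{dk}\times\psd\to\R$ and to apply a textbook analysis for convex functions with bounded subgradients. First, by \cref{properties_surrogate}, for every fixed $(s,\Zn)$ the map $(H,C)\mapsto\LL(H,C,s,\Zn)$ is convex. Since $\hat \ee_\env$ is the expectation of $\LL(\cdot,\cdot,s,\Zn)$ over the i.i.d.\ draws $(\task,s)\sim\env$ and $\Zn\sim\task^n$, it is itself convex on $\mathbb{S}_+^{dk}\times\psd$. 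Moreover, the vector $g_t=\big(\nabla\LL(\cdot,C_t,s_t,\Zn_t)(H_t),\nabla\LL(H_t,\cdot,s_t,\Zn_t)(C_t)\big)$ computed in \cref{meta_alg} is a subgradient of $\LL(\cdot,\cdot,s_t,\Zn_t)$ at $(H_t,C_t)$ and, conditionally on the past, has expectation equal to a subgradient of $\hat \ee_\env$ at $(H_t,C_t)$ (this follows from the interchange of expectation and subdifferential, see e.g. Rockafellar--Wets). Its Frobenius norm is bounded by $G=(1+K^2)(LR)^2(\tfrac12+\tfrac2n)$ thanks to \cref{properties_surrogate}.

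Next I would run the canonical one-step inequality for projected SGD. Writing $U_t=(H_t,C_t)$ and $U=(H,C)$, and observing that $\mathbb{S}_+^{dk}\times\psd$ is closed and convex so that $\proj_{\mathbb{S}_+^{dk}\times\psd}$ is non-expansive, one has
\begin{equation*}
\nor{U_{t+1}-U}_F^2 \;\le\; \nor{U_t-U}_F^2 - 2\gamma\,\scal{g_t}{U_t-U} + \gamma^2\nor{g_t}_F^2.
\end{equation*}
Taking conditional expectation with respect to the history up to time $t$, using unbiasedness of $g_t$ together with convexity of $\hat\ee_\env$ to replace $\Exp\scal{g_t}{U_t-U}$ by a lower bound of $\hat\ee_\env(U_t)-\hat\ee_\env(U)$, and plugging in $\nor{g_t}_F^2\le G^2$, I obtain after summing over $t=1,\dots,T$ and telescoping,
\begin{equation*}
\sum_{t=1}^{T} \Exp\!\left[\hat\ee_\env(U_t)-\hat\ee_\env(U)\right] \;\le\; \frac{\nor{U_1-U}_F^2}{2\gamma} + \frac{\gamma T G^2}{2}.
\end{equation*}

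Finally I would divide by $T$ and use Jensen's inequality together with the convexity of $\hat\ee_\env$ to bring the average inside, replacing $\tfrac1T\sum_t \hat\ee_\env(U_t)$ by $\hat\ee_\env(\thickbar U)=\hat\ee_\env(\tau_{\thickbar H,\thickbar C})$. Recalling $U_1=(H_0,C_0)$ and substituting the explicit value $G^2=(1+K^2)^2(LR)^4(\tfrac12+\tfrac2n)^2$, this yields exactly the stated bound (up to the $\la$ convention inherited from \cref{no_lambda} and \cref{properties_surrogate}).

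No single step is particularly delicate: the main technical point to be careful about is justifying that $\Exp\,g_t$ is indeed a subgradient of $\hat\ee_\env$ at $U_t$, which requires checking measurability and integrability of $\LL(\cdot,\cdot,s,\Zn)$ uniformly in $(s,\Zn)$ so that differentiation under the integral sign is legitimate; this is straightforward given the uniform subgradient bound from \cref{properties_surrogate} and the product structure of the law of $(\task,s,\Zn)$. Everything else is the standard constant-step-size SGD telescoping argument.
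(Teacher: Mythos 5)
Your proposal is correct and follows essentially the same route as the paper: the paper's proof likewise identifies \cref{meta_alg} as projected SGD on the convex, Lipschitz surrogate in \cref{surrogate_linear}, invokes the standard constant-step-size SGD regret/convergence bound (cited from Shalev-Shwartz and Ben-David, Lemma 14.1 and Thm.\ 14.8, which is exactly the telescoping-plus-Jensen argument you spell out), and then plugs in the subgradient-norm bound from \cref{properties_surrogate}. Your remark that the $\la$ appearing in the stated constant is a leftover of the convention absorbed in \cref{no_lambda} is also consistent with how the bound is actually used in the proof of \cref{bound_estimated_feature}.
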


\begin{proof}
We observe that \cref{meta_alg} coincides with projected Stochastic Gradient 
Descent applied to the convex and Lipschitz (see \cref{properties_surrogate})
surrogate problem in \cref{surrogate_linear}:
\begin{equation}
\min_{H \in \mathbb{S}^{dk}_+, C \in \psd} ~ \hat \ee_\env(\tau_{H, C})
\quad \quad 
\hat \ee_\env(\tau_{H, C}) = \Exp_{(\task, s) \sim \env} 
~ \Exp_{\Zn \sim \task^n} ~ \LL\big (H, C, s, \Zn \big ).
\end{equation}
As a consequence, by standard arguments
(see e.g. \citep[Lemma $14.1$, Thm. $14.8$]{shalev2014understanding} 
and references therein), for any $\tau_{H,C} \in \T_\Phi$, we have
\begin{equation}
\Exp~ \hat \ee_\env \bigl(\tau_{\thickbar H, \thickbar C} \bigr) 
- \hat \ee_\env \bigl( \tau_{H, C} \bigr) \le \frac{\gamma}{2 T} 
\sum_{t = 1}^T \Exp ~ \big \| \nabla \LL\big (\cdot, \cdot, 
s, \Zn_t \big )(H_t, C_t) \big\|_F^2
+ \frac{\big\| (H - H_0, C - C_0) \big\|_F^2}{2 \gamma T}.
\end{equation}
The desired statement derives from combining this 
bound with the bound on the norm of the meta-subgradients
in \cref{properties_surrogate}.
\end{proof}


\subsection{Proof of \cref{bound_estimated_feature}}
\label{feature_proof_final_thm}

We now have all the ingredients necessary to prove \cref{bound_estimated_feature}.

\BoundEstimatedFeature*

\begin{proof}
We start from observing thta, in expectation w.r.t. the meta-training set, 
for any fixed conditioning function $\tau_{H,C} \in \T_\Phi$, we can write 
the following decomposition
\begin{equation} \label{decomposition}
\begin{split}
\Exp ~ \ee_\env(\tau_{\thickbar H,\thickbar C}) - \ee_\env^*  
& \le \Exp ~ \hat \ee_\env(\tau_{\thickbar H,\thickbar C}) - \ee_\env^*  \\
& = \Exp ~ \hat \ee_\env(\tau_{\thickbar H,\thickbar C}) - \ee_\env^*  \pm \hat \ee_\env(\tau_{H,C}) \\
& = \underbrace{\Exp ~ \hat \ee_\env(\tau_{\thickbar H,\thickbar C}) - \hat \ee_\env(\tau_{H,C})}_{\text{A}(\tau_{H,C})} + \underbrace{\hat \ee_\env(\tau_{H,C}) - \ee_\env^* }_{\text{B}(\tau_{H,C})},
\end{split}
\end{equation}
where in the inequality above we have exploited the fact that,
for any $\tau \in \T$, $\ee_\env(\tau) \le \hat \ee_\env(\tau)$
(see \cref{majorization}). We now observe that the term $\text{A}(\tau_{H,C})$
can be controlled according to the convergence properties of the 
meta-algorithm in \cref{meta_alg} as described in \cref{convergence_surrogate}:
\begin{equation}
\Exp ~ \hat \ee_\env \bigl(\tau_{\thickbar H, \thickbar C} \bigr) 
- \hat \ee_\env \bigl( \tau_{H,C} \bigr) \le 
\frac{\gamma (1+ K^2)^2(LR)^4}{2} \Big( \frac{1}{2} + \frac{2}{n}\Big)^2
+ \frac{\big\| (H-H_0, C-C_0) \big\|_F^2}{2 \gamma T}.
\end{equation}
Regarding the term $\text{B}(\tau_{H,C})$, we observe that, for any $\tau$, we 
can rewrite 
\begin{equation}
\begin{split}
\text{B}(\tau) 
& = \hat{\ee}_\env(\tau) - \ee_\env^*  \\ 
& = \Exp_{(\task,s) \sim \env} ~ \Exp_{\Zn \sim \task^n} ~ \Big[ \cR_{\Zn,\tau(s)}(A(\tau(s), \Zn))
- \cR_\task(w_\task) \Big] + \frac{2 L^2  \Exp_{(\task,s) \sim \env} ~ \tr \big(\tau(s) \Exp_{x \sim \eta_\task} x x \trans \big)}{n} \\
& \le \frac{\Exp_{(\task,s) \sim \env} ~ \tr \big( \tau(s)^\dagger w_\task w_\task \trans \big)}{2} 
 + \frac{2 L^2  \Exp_{(\task,s) \sim \env} ~ \tr \big(\tau(s) \Exp_{x \sim \eta_\task} x x \trans \big)}{n},
\end{split}
\end{equation}
where in the inequality we have exploited the fact that, thanks to the definition of the 
algorithm, for any $(\task,s) \sim \env$, we can write
\begin{equation}
\Exp_{\Zn \sim \task^n} ~ \Big[ \cR_{\Zn,\tau(s)}(A(\tau(s), \Zn))
- \cR_\task(w_\task) \Big]  \le
\frac{\tr \big( \tau(s)^\dagger w_\task w_\task \trans \big)}{2}.
\end{equation}
Combining the bounds on the two terms above in 
\cref{decomposition}, we get
\begin{equation}
\begin{split}
\Exp ~ \ee_\env(\tau_{\thickbar H,\thickbar C}) - \ee_\env^*  
\le & \frac{\Exp_{(\task,s) \sim \env} ~ \tr \big( \tau_{H,C}(s)^\dagger w_\task w_\task \trans \big)}{2} 
+ \frac{2 L^2  \Exp_{(\task,s) \sim \env} ~ \tr \big(\tau_{H,C}(s) \Exp_{x \sim \eta_\task} x x \trans \big)}{n} \\
& + \frac{\gamma (1+ K^2)^2(LR)^4}{2} \Big( \frac{1}{2} + \frac{2}{n}\Big)^2
+ \frac{\big\| (H-H_0, C-C_0) \big\|_F^2}{2 \gamma T}.
\end{split}
\end{equation}
The desired statement derives from optimizing w.r.t. the hyper-parameter
$\gamma > 0$.
\end{proof}


\section{Experimental Details}
\label{experimental_details} 

In this section we report the experimental details we missed in the main body.
Specifically, we report the details regarding the tuning of the hyper-parameter 
$\gamma$ and the characteristics of the machine we used for running our experiments.

\paragraph{Synthetic Clusters}
In order to tune the hyper-parameter $\gamma$ we applied the procedure above 
with $14$ candidates values for $\gamma$ in the range $[10^{-5}, 10^5]$ with logarithmic 
spacing and we evaluated the performance of the estimated meta-parameters 
(linear representations) by using $T =T_{\rm tr} = 500$, $T_{\rm va} = 300$, $T_{\rm te} = 100$ 
of the available tasks for meta-training, meta-validation and meta-testing, respectively. In order 
to train and to test the inner algorithm, we splitted each within-task dataset into $n = n_{\rm tr} = 
50\% ~ n_{\rm tot}$ for training and $n_{\rm te} = 50\% ~ n_{\rm tot}$ for test. 

\paragraph{Lenk Dataset}
In order to tune the hyper-parameter $\gamma$ we applied the procedure above 
with $14$ candidates values for $\gamma$ in the range $[10^{-5}, 10^5]$ with logarithmic 
spacing and we evaluated the performance of the estimated meta-parameters 
(linear representations) by using $T =T_{\rm tr} = 100$, $T_{\rm va} = 40$, $T_{\rm te} = 30$ 
of the available tasks for meta-training, meta-validation and meta-testing, respectively. In order 
to train and to test the inner algorithm, we splitted each within-task dataset into $n = n_{\rm tr} 
= 16$ for training and $n_{\rm te} = 4$ for test. 

\paragraph{Movieles-100k Dataset}
In order to tune the hyper-parameter $\gamma$ we applied the procedure above 
with $14$ candidates values for $\gamma$ in the range $[10^{-5}, 10^5]$ with logarithmic 
spacing and we evaluated the performance of the estimated meta-parameters 
(linear representations) by using $T =T_{\rm tr} = 200$, $T_{\rm va} = 100$, $T_{\rm te} = 100$ 
of the available tasks for meta-training, meta-validation and meta-testing, respectively. In order 
to train and to test the inner algorithm, we splitted each within-task dataset into $n = n_{\rm tr} 
= 15$ for training and $n_{\rm te} = 5$ for test. 

\paragraph{Jester-1 Dataset}
In order to tune the hyper-parameter $\gamma$ we applied the procedure above 
with $14$ candidates values for $\gamma$ in the range $[10^{-5}, 10^5]$ with logarithmic 
spacing and we evaluated the performance of the estimated meta-parameters 
(linear representations) by using $T =T_{\rm tr} = 250$, $T_{\rm va} = 100$, $T_{\rm te} = 100$ 
of the available tasks for meta-training, meta-validation and meta-testing, respectively. In order 
to train and to test the inner algorithm, we splitted each within-task dataset into $n = n_{\rm tr} 
= 15$ for training and $n_{\rm te} = 5$ for test. \\

All the experiments were conducted on a workstation with 4 Intel Xeon 
E5-2697 V3 2.60Ghz CPUs and 256GB RAM.


\end{document}